\documentclass[11pt]{article}
\pdfoutput=1
\usepackage[square,sort,comma,numbers]{natbib}
\usepackage{amsmath,amsthm,amssymb,natbib,url}
%\makeatletter
% \let\Ginclude@graphics\@org@Ginclude@graphics 
%\makeatother
%\usepackage{times}
\usepackage[margin=1in]{geometry}
\usepackage{caption}
\usepackage{subfig}
\usepackage[pdftex]{graphicx}
\usepackage{amsfonts}
\usepackage{mathtools}
\usepackage{amssymb}

\usepackage{comment}
\usepackage{nicefrac}
\usepackage{thmtools, thm-restate}
\newtheorem{theorem}{Theorem}%[section]
\newtheorem{proposition}{Proposition}[section]
\newtheorem{Conjecture}{Conjecture}[section]

\newtheorem{lemma}[theorem]{Lemma}
\newtheorem{definition}{Definition}
\usepackage{hyperref}       % hyperlinks
%\usepackage{url}            % simple URL typesetting
%\usepackage{booktabs}       % professional-quality tables
%\usepackage{amsfonts}       % blackboard math symbols
%\usepackage{nicefrac}       % compact symbols for 1/2, etc.
%\usepackage{microtype}      % microtypography
% Use \Name{Author Name} to specify the name.
% If the surname contains spaces, enclose the surname
% in braces, e.g. \Name{John {Smith Jones}} similarly
% if the name has a "von" part, e.g \Name{Jane {de Winter}}.
% If the first letter in the forenames is a diacritic
% enclose the diacritic in braces, e.g. \Name{{\'E}louise Smith}

% Two authors with the same address
% \coltauthor{\Name{Author Name1} \Email{abc@sample.com}\and
%  \Name{Author Name2} \Email{xyz@sample.com}\\
%  \addr Address}

% Three or more authors with the same address:
% \coltauthor{\Name{Author Name1} \Email{an1@sample.com}\\
%  \Name{Author Name2} \Email{an2@sample.com}\\
%  \Name{Author Name3} \Email{an3@sample.com}\\
%  \addr Address}
\newcommand{\E}{\textbf{E}}
\newcommand{\Var}{\textbf{Var}}
\newcommand{\eps}{\epsilon}

\usepackage{authblk}
\begin{document}
\title{Implicit regularization for deep neural networks driven by an Ornstein-Uhlenbeck like process}
\author[1]{Guy Blanc\thanks{gblanc@cs.stanford.edu}}
\author[1]{Neha Gupta\thanks{nehagupta@cs.stanford.edu}}
\author[1]{Gregory Valiant\thanks{gvaliant@cs.stanford.edu}}
\author[2]{Paul Valiant\thanks{pvaliant@gmail.com}}
\affil[1]{Stanford University}
\affil[2]{Purdue University; Institute for Advanced Study}
\maketitle

\begin{abstract}
We consider networks, trained via stochastic gradient descent to minimize $\ell_2$ loss, with the training labels perturbed by independent noise at each iteration. We characterize the behavior of the training dynamics near any parameter vector that achieves zero training error, in terms of an implicit regularization term corresponding to the sum over the data points, of the squared $\ell_2$ norm of the gradient of the model with respect to the parameter vector, evaluated at each data point.  This holds for networks of any connectivity, width, depth, and choice of activation function.  We interpret this implicit regularization term for three simple settings: matrix sensing, two layer ReLU networks trained on one-dimensional data, and two layer networks with sigmoid activations trained on a single datapoint.  For these settings, we show why this new and general implicit regularization effect drives the networks towards ``simple'' models. 
%highlight the ut, to show that for 2-layer ReLU networks of arbitrary width and  $\ell_2$ loss, when trained on one-dimensional labeled data $(x_1,y_1),\ldots,(x_n,y_n),$ the only stable solutions with zero training error correspond to functions that: 1) are linear over any set of three or more co-linear training points (i.e. the function has no extra ``kinks''); and 2)  change convexity the minimum number of times that is necessary to fit the training data.  Additionally, for 2-layer networks of arbitrary width, with tanh or logistic activations, we show that when trained on a single $d$-dimensional point, $(x,y)$, the only stable solutions correspond to networks where the activations of all hidden units at the datapoint, and all weights from the hidden units to the output, take at most two distinct values, or are zero.  In this sense, 
%We show that when trained on ``simple'' data, models corresponding to stable parameters are also ``simple''; in short, despite fitting in an over-parameterized regime where the vast majority of expressible functions are complicated and badly behaved, stable parameters reached by training with noise express nearly the ``simplest possible'' hypothesis consistent with the data. These results shed light on the mystery of why deep networks generalize so well in practice. 
\end{abstract}
\section{Introduction}
This work is motivated by the grand challenge of explaining---in a rigorous way---why deep learning performs as well as it does. Despite the explosion of interest in deep learning, driven by many practical successes across numerous domains, there are many basic mysteries regarding why it works so well.  Why do networks with orders of magnitude more parameters than the dataset size, trained via stochastic gradient descent (SGD), often yield trained networks with small generalization error, despite the fact that such networks and training procedures are capable of fitting even randomly labeled training points~\cite{zhang2016understanding}?  Why do deeper networks tend to generalize better, as opposed to worse, as one might expect given their increased expressivity?  Why does the test performance of deep networks often continue to improve after their training loss plateaus or reaches zero?

%such small generalization error?  \emph{any} labels---concomitant explosion of success for applications enabled by deep learning, our theoretical understanding has lagged far beyond the state of practice. Since practice has ``leapfrogged" over theory in this setting, there is strong motivation to understand the principles that enable deep learning's success, so that future advances may be driven more by theoretical principles than by adapting ill-understood ``recipes" from previous successes.

%seek to shed light on a few specific mysteries at the heart of deep learning. 

%why over-parameterized settings have such good generalization; why test error continues improving after training error has hit 0; even though deep learning models have enough parameters to express functions that get all the training data right and all the test data wrong, why does deep learning (when trained on training data) typically converge to something with test data performance far on the small side of what is allowed by the expressivity of the hypothesis class? Depth of neural networks improves their expressivity, but why does depth seem to help generalization (as opposed to degrade it)? [2 trainable layers allows our piecewise linear statement while 1 trainable layer doesn't do anything nontrivial]

%
%\begin{comment}

In this paper, we introduce a framework that sheds light on the above questions.  Our analysis focuses on deep networks, trained via SGD, but where the gradient updates are computed with respect to noisy training labels.  Specifically, for a stochastic gradient descent update for training data point $x$ and corresponding label $y$, the gradient is computed for the point $(x,y+Z)$ for some zero-mean, bounded random variable $Z$, chosen independently at each step of SGD.   We analyze this specific form of SGD with independent label noise because such training dynamics seem to reliably produce ``simple'' models, independent of network initialization, even when trained on a small number of data points.  This is not true for SGD without label noise, which has perhaps hindered attempts to rigorously formalize the sense in which training dynamics leads to ``simple'' models.  In Section~\ref{sec:futureDirections}, however, we discuss the possibility that a variant of our analysis might apply to SGD without label noise, provided the training set is sufficiently large and complex that the randomness of SGD mimics the effects of the explicit label noise that we consider.

Our main result, summarized below, characterizes the zero-training-error attractive fixed points of the dynamics of SGD with label noise and $\ell_2$ loss, in terms of the local optima of an implicit regularization term.

\begin{theorem}[informal]\label{thm:main}
Given training data $(x_1,y_1),\ldots,(x_n,y_n)$, consider a model $h(x,\theta)$ with bounded derivatives up to $3^{\textrm{rd}}$ order, and consider the dynamics of SGD, with independent bounded label noise of constant variance, and $\ell_2$ loss function $\frac{1}{n}\sum_{i=1}^n (h(x_i,\theta)-y_i)^2$.
A parameter vector $\theta^*$ with 0 training error will be an attractive fixed point of the dynamics if and only if $\theta^*$ is a local minimizer of the ``implicit regularizer''
\begin{equation} reg(\theta) = \frac{1}{n}\sum_{i=1}^n \|\nabla_{\theta} h(x_i,\theta)\|_2^2, \label{eq:implicit} \end{equation} when restricted to the manifold of 0 training error.
\end{theorem}

While the exact dynamics are hard to rigorously describe, the results here are all consistent with the following \emph{nonrigorous} caricature: SGD with label noise proceeds as though it is optimizing not the loss function, but rather the loss function plus the implicit regularizer times the product of the learning rate ($\eta$) and the standard deviation of the label noise. Thus, for small learning rate, SGD with label noise proceeds in an initial phase where the training loss is optimized to 0, followed by a second phase where the implicit regularizer is optimized within the manifold of training error 0.

%\vspace{-1cm}
\subsection{Implications and Interpretations}

We illustrate the implications of our general characterization in three basic settings for which the implicit regularization term can be analyzed: matrix sensing as in~\cite{li2017algorithmic}, two-layer ReLU networks trained on one-dimensional data, and 2-layer networks with logistic or tanh activations trained on a single labeled datapoint.  In all three cases, empirically, training via SGD with label noise yields ``simple'' models, where training without label noise results in models that are not simple and that depend on the initialization.  The intuitive explanation for this second point is clear: there is a large space of models that result in zero training error. Once optimization nears its first 0-training-error hypothesis, optimization halts, and the resulting model depends significantly on the network initialization.  In the following three examples, we argue that the combination of zero training error and being at a local optima of the implicit regularizer reduces the set of models to \emph{only} ``simple'' ones. %it will halt there. And thus the choice of which such function they converge to may be dictated by quirks in the initialization or quirky nonconvexities encountered in the training process. %optimization proceeds rapidly and the model converges to a zero training error solution before it has ``forgotten'' the initialization of the network.  As the model approaches zero training error, all gradients converge to zero, and the model is ``stuck''.  

\paragraph{Matrix sensing: Convergence to ground truth from any initialization.}

\cite{li2017algorithmic} consider the problem of ``matrix sensing": given a set of linear ``measurements" of an unknown matrix $X^*$---namely, inner products with randomly chosen matrices $A_i$, can one find the lowest-rank matrix $X$ consistent with the data? They found, quite surprisingly, that gradient descent \emph{when initialized to an overcomplete orthogonal matrix of small Frobenius norm} implicitly regularizes by, essentially, the rank of $X$, so that the lowest-rank $X$ consistent with the data will be recovered provided the optimization is not allowed to run for too many steps. %important that it needs only about n d data points... but I'm not sure if we have any argument that we need similar amount of data

Intriguingly, our implicit regularizer allows SGD with label noise to reproduce this behavior for \emph{arbitrary} initialization, and further, without eventually overfitting the data. We outline the main ingredients here and illustrate with empirical results.  As in \cite{li2017algorithmic}, we take our objective function to be minimizing the squared distance between each label $y_i$ and the (Frobenius) inner product between the data matrix $A_i$ and the symmetrized hypothesis $X=UU^\top$: \[\min_U \sum_i \left(y_i-\langle A_i,UU^\top\rangle\right)^2\] 

In our notation, this corresponds to having a hypothesis---as a function of the parameters $U$ and the data $A_i$---of $h(A_i,U)=\langle A_i,UU^\top\rangle$. Thus, taking data drawn from the i.i.d. normal distribution, the implicit regularizer, Equation~\ref{eq:implicit}, is seen to be \begin{equation}reg(U)=\mathop{\E}_{A:A_{j,k}\leftarrow\mathcal{N}(0,1)}[\,||(A+A^\top)U)||_F^2\,\label{eq:matrix-regularizer}]\end{equation}

Because $A$ has mean 0 and covariance equal to the identity matrix, for $d\times d$ matrices this expectation is calculated to be $2(d+1)||U||_F^2$. Further, expressed in terms of the overall matrix $X=UU^\top$, the squared Frobenius norm of $U$ equals the \emph{nuclear norm} of $X$, $||U||_F^2=||X||_*$, where the nuclear norm may be alternatively defined as the convex envelope of the rank function on matrices of bounded norm. See e.g.~\cite{candes2009exact,recht2010guaranteed} for discussion of conditions under which minimizing the nuclear norm under affine constraints guarantees finding the minimum rank solution to the affine system. In our setting, each data point $(A_i,y_i)$ induces an affine constraint on $UU^\top$ that must be satisfied for the training error to be 0, and thus the implicit regularizer will tend to find the minimum of Equation~\ref{eq:matrix-regularizer} subject to the constraints, and hence the minimum rank solution subject to the data, as desired.  We note that the above intuitive analysis is only in expectation over the $A_i$'s, and we omit an analysis of the concentration. However, empirical results, in Figure~\ref{fig:matrixSensing}, illustrate the success of this regularizing force, in the natural regime where $n=5\cdot rank \cdot dimension.$
%Namely, the implicit regularization induced by label noise is exactly proportional to a convexification of the rank of $X$; and thus %optimizing a convex objective function of $X$ can be expected to yield an $X$ of minimum rank subject to the constraints.

\begin{figure}
    \centering
    \captionsetup{font=small}
    \includegraphics[width=0.6\textwidth]{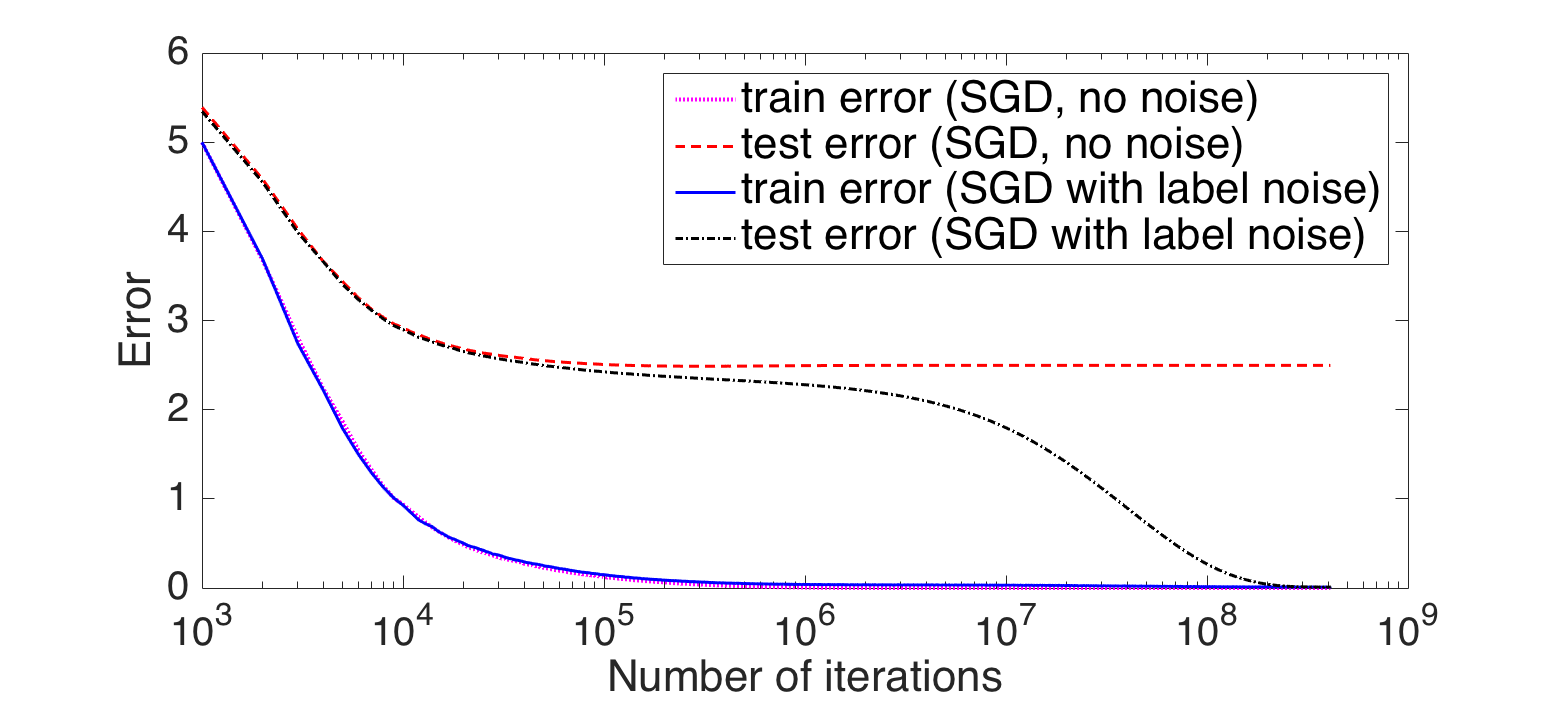}
\caption{Illustration of the implicit regularization of SGD with label noise in the \emph{matrix sensing} setting (see~\cite{li2017algorithmic}).  Here, we are trying to recover a rank $r$ $d\times d$ matrix $X^*=U^* U^{*\top}$ from $n=5dr$ linear measurements $A_1,\langle A_1,X^*\rangle,\ldots,A_n,\langle A_n,X^*\rangle$, via SGD both with and without label noise, with $r=5$ and $d=100$, and entries of $A_i$ chosen i.i.d. from the standard Gaussian.  Plots depict the test and training error for training with and without i.i.d. $N(0,0.1)$ label noise, initializing $U_0=I_d$.  (Similar results hold when $U_0$ is chosen with i.i.d. Gaussian entries.) For both training dynamics, the training error quickly converges to zero.  The test error without label noise plateaus with large error, whereas the test error with label noise converges to zero, at a longer timescale, inversely proportional to the \emph{square} of the learning rate, which is consistent with the theory.}
    \label{fig:matrixSensing}
\end{figure}

\paragraph{2-Layer ReLU networks, 1-d data: Convergence to piecewise linear interpolations.}  Consider a 2-layer ReLU network of arbitrary width, trained on a set of 1-dimensional real-valued datapoints, $(x_1,y_1),\ldots,(x_n,y_n)$.  Such models are not differentiable everywhere, and hence Theorem~\ref{thm:main} does not directly apply.  Nevertheless, we show that, if one treats the derivative at the ``kink'' in the ReLU function as being 0, then local optima of the implicit regularization term correspond to ``simple'' functions that have the minimum number of convexity changes necessary to fit the training points.  The proof of the following theorem is given in Appendix~\ref{ap:caseAnalysis}.

\begin{theorem}\label{thm:informal_1}
\sloppy Consider a dataset of 1-dimensional data, corresponding to $(x_1,y_1),\ldots,(x_n,y_n)$ with $x_i < x_{i+1}$.  Let $\theta$ denote the parameters of a 2-layer ReLU network  (i.e. with two layers of trainable weights) and where there is an additional constant and linear unit leading to the output.  Let $\theta$ correspond to a function with zero training error.  If the function, restricted to the interval $(x_1,x_n)$, has more than the minimum number of changes of convexity necessary to fit the data, then there exists an infinitesimal perturbation to $\theta$ that 1) will preserve the function value at all training points, and 2) will decrease the implicit regularization term of Equation~\ref{eq:implicit}, provided we interpret the derivative of a ReLU at its kink to be 0 (rather than undefined).
\end{theorem}

The above theorem, together with the general characterization of attractive fixed points of the dynamics of training with label noise (Theorem~\ref{thm:main}), suggest that we should expect this noisy SGD training to lead to ``simple'' interpolations of the datapoints; specifically, for any three co-linear points, the interpolation should be linear. This behavior is supported by the experiments depicted in Figure~\ref{fig:1d}, which also illustrates the fact that training without label noise produces models that are not simple, and that vary significantly depending on the network initialization. 

\begin{figure}[h]
%  \centering
%  \captionsetup{font=small}
  %\begin{tabular}{ccc}
  \includegraphics[width=1\textwidth]{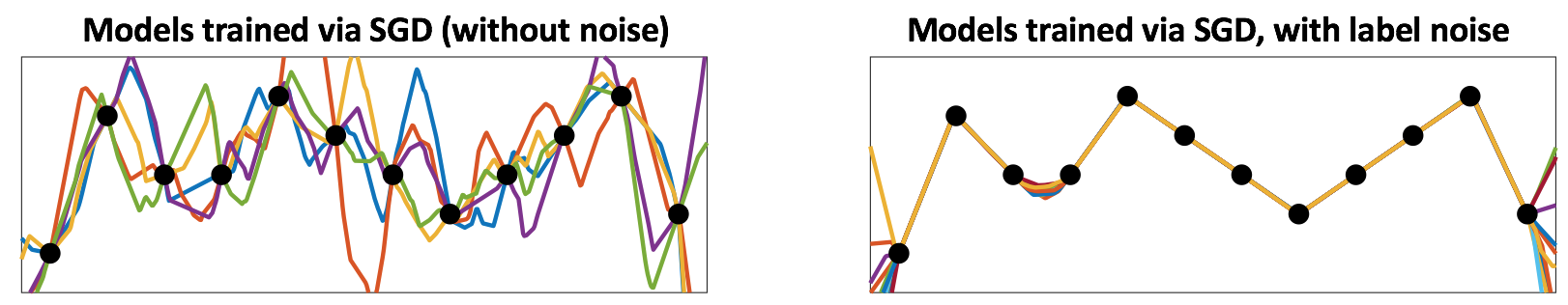}
  \vspace*{-5pt}
%\end{tabular}
\caption{\small{Both plots depict 2-layer ReLU networks, randomly initialized and trained on the set of 12 points depicted.  The left plot shows the final models resulting from training via SGD, for five random initializations.  In all cases, the training error is 0, and the models have converged.  The right plot shows the models resulting from training via SGD \emph{with independent label noise}, for 10 random initializations. Theorem~\ref{thm:informal_1} explains this behavior as a consequence of our general characterization of the implicit regularization effect that occurs when training via SGD with label noise, given in Theorem~\ref{thm:main}.  Interestingly, this implicit regularization does \emph{not}  occur (either in theory or in practice) for ReLU networks with only a single layer of trainable weights.}}\vspace*{-5pt}
  \label{fig:1d}
\end{figure}

\begin{figure}[h]
  \centering
  \includegraphics[width=0.4\textwidth]{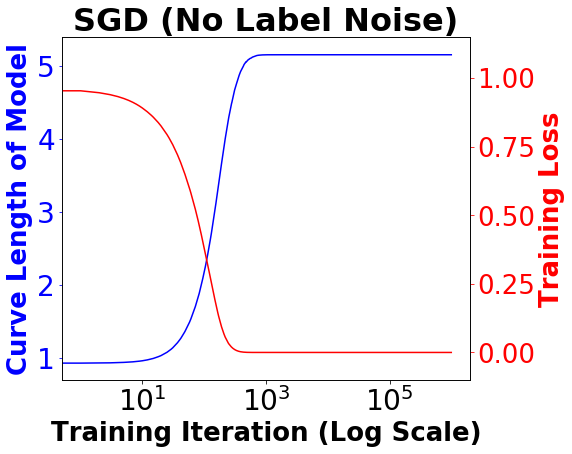}\quad \includegraphics[width=0.4\textwidth]{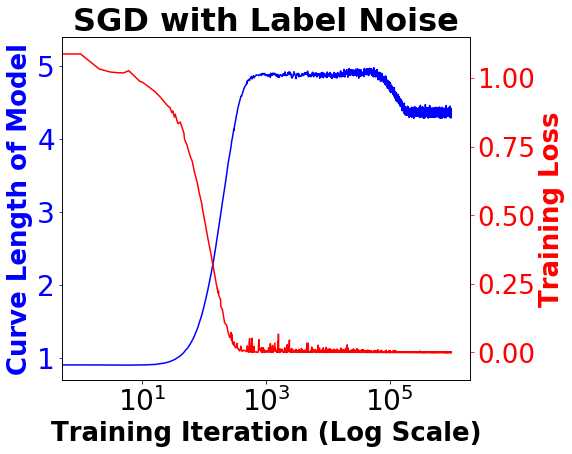}
\caption{\small{Plots depicting the training loss (red) and length of the curve corresponding to the trained model (blue) as a function of the number of iterations of training for 2-layer ReLU trained on one-dimensional labeled data.  The left plot corresponds to SGD without the addition of label noise, and converges to a trained model with curve length $\approx 5.2$.  The right plot depicts the training dynamics of SGD with independent label noise,  illustrating that training first finds a model with close to zero training error, and then---at a much longer timescale---moves within the zero training error manifold to a ``simpler'' model with significantly smaller curve length of $\approx 4.3$.  Our analysis of the implicit regularization of these dynamics explains why SGD with label noise favors simpler solutions, as well as why this ``simplification'' occurs at a longer timescale than the initial loss minimization.}\label{fig:curveLength}}
\end{figure}

\paragraph{2-Layer sigmoid networks, trained on one datapoint: Convergence to sparse models.}

 Finally, we consider the implicit regularizer in the case of a two layer network (with arbitrary width) with logistic or hyperbolic tangent activations, when trained on a dataset that consists of a single labeled $d$-dimensional point.  The proof of this result is given in Appendix~\ref{ap:n=1}.

\begin{theorem}
\label{thm:n=1}
    Consider a dataset consisting of a single $d$-dimensional labeled point,  $(x,y)$. Let $\theta = (\{c_i\}, \{w_i\})$ denote the parameters of a 2-layer network with arbitrary width, representing the function
        $f_\theta(x) = \sum_{i=1}^n c_i \sigma (w_i^t x),$
    where the activation function $\sigma$ is either  tanh or the logistic activation. If $\theta$ corresponds to a model with zero training error for which the implicit regularizer of Equation~\ref{eq:implicit} is at a local minimum in the zero training error manifold, then there exists $\alpha_1, \alpha_2$ and $\beta_1, \beta_2$ such that for each hidden unit $i$, either $c_i = \alpha_1$ and $\sigma(w_i^t x) = \beta_1$, or $c_i = \alpha_2$ and $\sigma(w_i^t x) = \beta_2$, or $c_i=\sigma(w_i^tx) = 0$.  In the case of tanh activations, $\alpha_1=-\alpha_2$ and $\beta_1=-\beta_2$.
\end{theorem}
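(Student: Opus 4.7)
My plan is to translate the non-repellent hypothesis into a Lagrange-multiplier condition via Theorem~\ref{thm:main}, and then analyze that condition pointwise for each hidden unit.

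With only one data point $(x,y)$, the regularizer simplifies to
\begin{equation*}
reg(\theta)=\|\nabla_\theta f_\theta(x)\|_2^2 = \sum_{i} \sigma(w_i^t x)^2 + \|x\|_2^2 \sum_{i} c_i^2\, \sigma'(w_i^t x)^2,
\end{equation*}
and both $f_\theta(x)$ and $reg(\theta)$ depend on $w_i$ only through the scalar $u_i := w_i^t x$. Therefore the components of each $w_i$ orthogonal to $x$ are tangent to the zero-error manifold and automatically lie in the kernel of $\nabla reg$, so I can restrict attention to the effective variables $(c_i,u_i)$. Theorem~\ref{thm:main} then forces the existence of a scalar $\lambda$ satisfying $\nabla_{(c,u)} reg = \lambda\, \nabla_{(c,u)}(\sum_i c_i \sigma(u_i))$ at $\theta$.

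Next, I would write out the two Lagrange equations per hidden unit (setting $s_i := \sigma(u_i)$ and $a := \|x\|_2^2$):
\begin{align*}
2 a\, c_i\, \sigma'(u_i)^2 &= \lambda\, s_i,\\
2 s_i \sigma'(u_i) + 2 a\, c_i^2\, \sigma'(u_i)\sigma''(u_i) &= \lambda\, c_i\, \sigma'(u_i).
\end{align*}
Assuming $\sigma'(u_i)\neq 0$, I eliminate $\lambda$ to obtain the per-unit invariant $a\, c_i^2\,[\sigma'(u_i)^2 - s_i\, \sigma''(u_i)] = s_i^2$. For tanh this reduces to $a c_i^2 (1 - s_i^4) = s_i^2$ (using $\sigma''=-2\sigma\sigma'$), while for the logistic activation it becomes $a c_i^2 s_i(1-s_i) = 1$ (using $\sigma''=(1-2\sigma)\sigma'$). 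Combined with the first Lagrange equation these pin $(c_i,s_i)$ down very tightly. In the tanh case one obtains $\lambda^2(1+s_i^2) = 4a(1-s_i^2)^3$, whose right-hand side is strictly decreasing in $s_i^2\in[0,1]$; thus $s_i^2$ is uniquely determined by $\lambda$, so $s_i\in\{\beta_1,-\beta_1\}$, and $c_i$ inherits the sign via the first equation, giving exactly the antipodal structure $\alpha_2=-\alpha_1,\ \beta_2=-\beta_1$ claimed. In the logistic case the same elimination yields $\lambda^2 = 4a\,s_i(1-s_i)^3$; since $s(1-s)^3$ is unimodal on $(0,1)$, $s_i$ takes at most two values $\beta_1,\beta_2$ and each determines $c_i$ uniquely from the first equation.

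The main technical obstacle I anticipate is handling the degenerate branches cleanly. For tanh, once $s_i=0$ the first Lagrange equation forces $c_i=0$, which accounts for the ``or $c_i=\sigma(w_i^t x)=0$'' alternative in the statement; for the logistic activation $\sigma>0$ always, so this branch is vacuous and can be discarded. Units with $\sigma'(u_i)=0$ occur only in the limits $|u_i|\to\infty$ for both activations and should be treated either as a boundary case of the parametrization above or excluded by a small non-degeneracy perturbation. Finally, the reduction from ``non-repellent'' to the Lagrange identity uses the $n=1$ specialization of Theorem~\ref{thm:main}: the subspace on which all $\nabla_\theta f(\theta,x_i)$ vanish is precisely the orthogonal complement of $\nabla_\theta f(\theta,x)$, so zero gradient of $reg$ there is equivalent to $\nabla reg \in \mathrm{span}(\nabla f)$, which is the Lagrange condition used above.
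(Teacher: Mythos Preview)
Your argument is correct and reaches the same conclusion as the paper, but the mechanics differ in a useful way. The paper works in the coordinates $(o_i,h_i)$ with $o_i=c_ih_i$, first exploiting perturbations that keep each $o_i$ fixed to force $h_i$ to a specific function of $o_i$, and then exploiting perturbations that transfer output between two units $i,j$ to force $R_o'(o_i)=R_o'(o_j)$ for a reduced one-variable regularizer $R_o$; the solution count then comes from analyzing $R_o'$. You instead invoke the Lagrange condition $\nabla reg\in\mathrm{span}(\nabla f)$ directly in the $(c_i,u_i)$ coordinates and eliminate $\lambda$ and $c_i$ to obtain a single equation in $s_i$ (namely $\lambda^2(1+s_i^2)=4a(1-s_i^2)^3$ for tanh and $\lambda^2=4as_i(1-s_i)^3$ for logistic), which you then solution-count. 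The two routes are equivalent---your per-unit invariant $ac_i^2[\sigma'(u_i)^2-s_i\sigma''(u_i)]=s_i^2$ is exactly the paper's optimality condition $\partial R/\partial h_i=0$ rewritten, and your ``same $\lambda$ for all $i$'' plays the role of the paper's ``same $R_o'(o_i)$ for all $i$.'' Your formulation is more compact and avoids introducing $o_i$; the paper's is more constructive in that it names the specific zero-error directions being used. Your handling of the degenerate branches ($s_i=0$ forcing $c_i=0$ for tanh, and $\sigma'>0$ always for finite parameters) is adequate and matches what the paper needs.
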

The above theorem captures the sense that, despite having arbitrarily many hidden units, when trained on an extremely simple dataset consisting of a single training point, the stable parameters under the training dynamics with label noise correspond to simple models that do not leverage the full expressive power of the class of networks of the given size.

\subsection{Future Directions}\label{sec:futureDirections}
There are a number of tantalizing directions for future research, building off the results of this work.  One natural aim is to better understand what types of stochasticity in the training dynamics lead to similar implicit regularization.  In our work, we consider SGD with independently perturbed labels.  These training dynamics are equivalent to standard SGD, performed over a dataset where each original datapoint $(x,y)$ has two ``copies'', corresponding to $(x,y+\delta)$ and $(x,y-\delta)$.  In this setting with two perturbed copies of each data point, the implicit regularization can be viewed as arising from the stochasticity of the choice of datapoint in SGD, together with the fact that no model can perfectly fit the data (since each $x$-value has two, distinct, $y$ values).  Motivated by this view, one natural direction would be to  rigorously codify the sense in which implicit regularization arises from performing SGD (without any additional noise) over ``difficult-to-fit'' data.  Figure~\ref{fig:1d} illustrates the importance of having difficult-to-fit data, in the sense that if the training loss can be driven close to zero too quickly, then training converges before the model has a chance to forget its initialization or ``simplify''.  One hope would be to show that, on any dataset for which the magnitude of each SGD update remains large for a sufficiently large number of iterations, a similar characterization to the implicit regularization we describe, applies.  

In a different direction, it seems worthwhile characterizing the implications of Theorem~\ref{thm:main} beyond the matrix sensing setting, or the 1-dimensional data, 2-layer ReLU activation setting, or the single datapoint tanh and sigmoid settings we consider.  For example, even in the setting of 1-dimensional data, it seems plausible that the characterization of Theorem~\ref{thm:main} can yield a result analogous to Theorem~\ref{thm:informal_1} for ReLU networks of any depth greater than 2 (and any width), as opposed to just the 2-layer networks we consider (and empirically, the analogous claim seems to hold).  For 2-layer networks with tanh or sigmoid activations, it seems likely that our proof could be generalized to argue that: any non-repellent set of parameters for a dataset of at most $k$ points has the property that there are only $O(k)$ classes of activations.

The question of generalizing the characterization of non-repellent parameters from the 1-dimensional data setting of Theorem~\ref{thm:informal_1} to higher dimensional data seems particularly curious.  In such a higher dimensional setting, it is not even clear what the right notion of a ``simple'' function should be.  Specifically, the characterization that the trained model has as few changes in convexity as is required to fit the data does not seem to generalize in the most natural way beyond one dimension.

%"mushy" wishful thinking stuff: SGD essentially divides into 2 phases - 1) reaching 0 training error; 2) traversing the 0-training error  manifold while effectively running SGD on the regularizer (as Figure 2 suggests...) Higher dimensions might intuitively make better minima more accessible, by ``increasing the quality of local minima" in line with other work[?]

Finally, it may also be fruitful to convert an understanding of how ``implicit regularization drives generalization'' into the development of improved algorithms.  Figure~\ref{fig:curveLength} and our results suggest that the implicit regulization which drives generalization occurs at a significantly longer time scale than the minimization of the objective function: the training dynamics rapidly approach the zero training error manifold, and then very slowly traverse this manifold to find a simpler model (with better generalization).  It seems natural to try to accelerate this second phase, for example, by making the regularization explicit.  More speculatively, if we understand why certain implicit (or explicit) regularizations yield models with good generalization, perhaps we can directly leverage a geometric understanding of the properties of such models to directly construct functions that interpolate the training set while having those desirable properties, entirely circumventing SGD and deep learning: for example, Theorem~\ref{thm:informal_1} and Figure~\ref{fig:1d} show a setting where SGD becomes essentially nearest-neighbor linear interpolation of the input data (where the distance metric can be viewed as a kernel analogous to the ``neural tangent kernel" of~\cite{jacot2018neural} ), a simple model that can be both justified and computed without reference to SGD.

\subsection{Related Work}

There has been much recent interest in characterizing which aspects of deep learning are associated with robust performance. We largely restrict our discussion to those works with provable results, though the flavor of those  results is rather different in each case.

An influential paper providing a rigorous example of how gradient descent can be effective despite more trainable parameters than training examples is the work of Li et al. on matrix sensing~\cite{li2017algorithmic}. In their setting (which is closely related to 2-layer neural networks with 1 out of the 2 layers of weights being trainable), they optimize the coefficients of an $n\times n$ matrix, subject to training data that is consistent with a low-rank matrix. What they show is that, for sufficiently small initial data, the coefficients essentially stay within the space of (approximately) low-rank matrices. And thus, while the number of trainable parameters is large ($n\times n$), gradient descent can effectively only access a space of dimension $k\times n$, where $k\ll n$ is the rank of the training data. %Explicitly, they show that $O(k^2 n)$ training examples suffice, even when this number is rather smaller than the number of parameters, $n^2$.
This paper marks a key example of provable ``algorithmic regularization'': the gradient descent algorithm leads to more felicitous optima than are typical, given the parameterization of the model. A few high-level differences between these results and ours include: 1) their results show that the high number of parameters in their setting is essentially an illusion, behind which their model behaves essentially like a low-parameter model, while evolution in our model is a high-dimensional phenomenon; 2) their model is closely related to a neural network with one layer of trainable weights, while we cover much deeper networks, revealing and relying on a type of regularization that cannot occur with only one trainable layer.

As in the above work of Li et al., ~\cite{maennel2018gradient} also proceeds by showing that, when initialized to a parameter vector of small norm, the training dynamics of ``simple'' data converge to a simple hypothesis. They empirically observe that the final function learned by a 2-layer ReLU network on 1-dimensional data, with parameters initialized to have small norm, is a piecewise linear interpolation. For the special case when the $n$ datapoints lie in a line, they prove that the resulting trained functions would have at most $2n+1$ changes in the derivative. 

Several recent papers have shown generalization bounds for neural networks by first describing how different settings lead to an implicit or explicit maximization of the \emph{margin} separating correct predictions from mispredictions. These papers are in a rather different setting from our current work, where data is typically labeled by discrete categories, and the neural network is trained to rate the correct category highly for each training example, while rating all incorrect categories lower by a margin that should be as large as possible. The paper %The Implicit Bias of Gradient Descent on Separable Data
by \cite{soudry2018implicit} showed that, under any of several conditions, when the categories are linearly separable, gradient descent will converge to the max-margin classifier. More generally, \cite{wei2018margin} %On the Margin Theory of Feedforward Neural Networks
showed that optimizing the cross-entropy loss is extremely similar to optimizing the maximum margin, in that, after adding an additional weak regularization term, the global optimum of their loss function provably maximizes the margin. This line of work both leverages and expands the many recent results providing generalization bounds in terms of the margin. %from previous Wei paper, itself, and its citations (Bartlett et al., 2017; Neyshabur et al., 2017a, 2018)
We caution, however, that the margin is still essentially a loss function on the training data, and so this alone cannot defend against over-parameterization and the often related problems of overfitting. (Our regularizer, by contrast, depends on a \emph{derivative} of the hypothesis, and thus unlike the margin, can discriminate between parameter vectors expressing identical functions on the training data.)

%While there is too much work to discuss that takes an empirical view towards exploring over-parameterization, we highlight one recent work that blends rigorous results with empirically observed patterns.%THE ROLE OF OVER-PARAMETRIZATION IN GENERALIZATION OF NEURAL NETWORKS \cite{neyshabur2018role} proves a generalization bound for 2-layer ReLU networks, via Rademacher complexity, expressed via the product of norms of the parameter vectors in each of the 2 trainable layers. They combine this result with the empirical observation that, at convergence, these two norms decrease as the number of units in the hidden layer increases, essentially as the same function is split across more trainable units, leading---empirically---to an apparent decrease in the norms of parameters after convergence, as the width of the network increases.

There are also quite different efforts to establish provable generalization, for example~\cite{hardt2015train,kuzborskij2017data}, which argue that if networks are trained for few epochs, then the final model is ``stable'' in that it does not depend significantly on any single data point, and hence it generalizes. Such analyses seem unlikely to extend to the realistic regimes in which networks are trained for large numbers of iterations over the training set. There is also the very recent work tightening this connection between stable algorithms and generalization~\cite{DBLP:journals/corr/abs-1902-10710}.     In a different vein, recent work~\cite{brutzkus2017sgd} establishes generalizability under strong (separability) assumptions on the data for overcomplete networks; this analysis, however, only trains one of the layers of weights (while keeping the other fixed to a carefully crafted initialization).

There has been a long line of work, since the late 1980's, studying the dynamics of neural network training in the presence of different types of noise (see, e.g.~\cite{sietsma1988neural,hanson1990stochastic,clay1992fault,murray1994enhanced,an1996effects,rifai2011adding}).  This line of work has considered many types of noise, including adding noise to the inputs, adding noise to the labels (outputs), adding noise to the gradient updates (``Langevin'' noise), and computing gradients based on perturbed parameters. Most closely related to our work is the paper of \cite{an1996effects}, which explicitly analyzes label noise, but did not analyze it in enough detail to notice the subtle 2nd-order regularization effect we study here, and thus also did not consider its consequences.

There have also been several efforts to rigorously analyze the apparent ability of adding noise in the training to avoid bad local optima.  For example, in~\cite{zhang2017hitting}, they consider Langevin noise---noise added to the gradient updates themselves---and show that the addition of this noise (provably) results in the model escaping from local optima of the empirical risk that do not correspond to local optima of the population-risk.  In a slightly different direction, there is also a significant effort to understand the type of noise induced by the stochasticity of SGD itself.  This includes the recent work~\cite{chaudhari2017stochastic} which empirically observes a peculiar \emph{non-stationary} behavior induced by SGD, and~\cite{zhu2018regularization} which describes how this stochasticity allows the model to tend towards more ``flat'' local minima.

\section{Formal statement of general characterization}

Our general result, Theorem~\ref{thm:main}, applies to any network structure---any width, any depth, any set of (smooth) activation functions.  The characterization establishes a simple condition for whether the training dynamics of SGD with label noise, trained under the $\ell_2$ loss, will drive the parameters away from a given zero training error solution $\theta$.  Our characterization is in terms of an implicit regularization term, proportional to the sum over the data points, of the squared $\ell_2$ norm of the gradient of the model with respect to the parameter vector, evaluated at each data point. Specifically, letting $h(x_i,\theta)$ denote the prediction at point $x_i$ corresponding to parameters $\theta$, the implicit regularizer is defined as \begin{equation}\label{eq:regularizer-intro}reg(\theta) = \frac{1}{n}\sum_{i=1}^n \|\nabla_{\theta} h(x_i,\theta)\|_2^2.\end{equation}  We show that for a zero training error set of parameters, $\theta_0$, if there is a data point $x_i$ and a direction (within the subspace that, to first order, preserves zero training error) where the regularizer has a nonzero gradient, then for any sufficiently small learning rate $\eta,$ if the network is initialized near $\theta_0$ (or passes near  $\theta_0$ during the training dynamics), then with probability $1-exp(-1/poly(\eta)),$ the dynamics will drive the parameters at least distance $D_{\eta}$ from $\theta_0$ after some time $T_{\eta},$ and the value of the implicit regularization term will decrease by at least $\Theta(poly(\eta)).$  On the other hand, if $\theta_0$ has zero gradient of the regularizer in these directions, then with probability $1-exp(-1/poly(\eta)),$ when initialized near $\theta_0$, the network will stay within distance $d_{\eta} = o(D_{\eta})$ up through time $T_{\eta}.$  This characterization corresponds to saying that the training dynamics will be expected to stay in the vicinity of a zero training error point, $\theta_0$, only if $\theta_0$ has zero gradient of the implicit regularizer within the zero training error manifold about $\theta_0$; for the particular time window $T_\eta$, this characterization is strengthened to ``if and only if''.   

\iffalse
\begin{theorem}[informal]
[Alternate phrasing:  If the learning dynamics reaches a point $w,$ which is close to a point $w'$ with zero training error and, for any sufficiently small learning rate, $\eta,$ the dynamics are expected to stay near $w$ for a long time, then $w'$ must have zero gradient of the regularizer in the span of directions that preserve zero training error.]
\end{theorem}
\fi

To quantify the above characterization, we begin by formally defining the sense in which training dynamics are ``repelled'' from points, $\theta,$ which are not local optima of the implicit regularizer within the manifold of zero training error, and defining the sense in which the dynamics are not repelled in the case where the implicit regularizer has zero gradient in these directions.

\begin{definition}\label{def:repellent}
Let $\theta(t)$ denote the set of parameters of a learning model, trained via $t$ steps of SGD under squared $\ell_2$ loss with independent label noise of unit variance. We say that $\theta^*$, is a ``strongly-repellent'' point, if there is a constant $c>0$ such that for any sufficiently small learning rate, $\eta>0,$ for a network initialized to $\theta(0)$ satisfying $\|\theta(0)-\theta^*\| \le \eta^{0.5},$ then with probability at least $1-exp(-1/poly(\eta))$,  for $t = \eta^{-1.6}:$ \begin{itemize}
    \item $\|\theta(t)-\theta^*\| \ge c\eta^{0.4},$ namely the training dynamics lead away from $\theta^*.$
    \item $reg(\theta(0))-reg(\theta(t)) = c \eta^{0.4} + o(\eta^{0.4}),$ namely, the value of the implicit regularization term decreases significantly.
    \iffalse   \Theta(\eta^{0.4}) $ $\widehat{\theta_{start}}:=avg\left(\theta(0),\theta(1),\ldots,\theta(1/\eta^{1.1})\right),$ and $\widehat{\theta_{end}}:=avg\left(\theta(t-1/\eta^{1.1}),\theta(t-1/\eta^{1.1}+1),\ldots,\theta(t)\right),$ then $reg(\widehat{\theta_{start}}) - reg(\widehat{\theta_{end}}) \ge \Theta(\eta^{???})$, namely the value of the implicit regularizer evaluated at the time-averaged value of the parameters, will decrease significantly.
    \fi
\end{itemize}
\end{definition}

\begin{definition}\label{def:nonrepellent}
  Given the setup above, we say that $\theta^*$, is a ``non-repellent'' point, if, for any sufficiently small learning rate, $\eta>0,$ for a network initialized to $\theta(0)$ satisfying $\|\theta(0)-\theta^*\| \le \eta^{0.5},$ then with probability at least $1-exp(-1/poly(\eta))$,  for any $t\le \eta^{-1.6},$ it holds that $\|\theta(t)-\theta^*\| \le \eta^{0.44}.$
\end{definition}

The following theorem quantifies the sense in which the implicit regularizer characterizes the dynamics of training, in the vicinity of parameters with zero training error.

\paragraph{Theorem~\ref{thm:main} } \emph{
Consider the dynamics of the parameters, $\theta,$ of a deep network, trained via SGD to minimize $\ell_2$ loss, with independent bounded label noise of unit variance.  Let parameters $\theta^*$  correspond to a model $f(\theta^*,x)$ with zero training error, namely $f(\theta^*,x_i)=y_i$ for all $i=1,\ldots,n$. If the implicit regularizer has zero gradient in the span of directions where $f(\theta^*,x_i)$ has zero gradient, for all $i$, then $\theta^*$ is ``non-repellent'' in the sense of Definition~\ref{def:nonrepellent} (meaning the dynamics will remain near $\theta^*$ with high probability for a sufficiently long time).  Otherwise, if the implicit regularizer has non-zero gradient in the directions spanned by the zero error manifold about $\theta^*$, then $\theta^*$ is ``strongly-repellent'' in the sense of Definition~\ref{def:repellent} (implying that with high probability, the dynamics will lead away from $\theta^*$ and the value of the implicit regularizer will decrease significantly).} \medskip

\section{Intuition of the implicit regularizer, via an Ornstein-Uhlenbeck like analysis}\label{sec:prelims}

The intuition for the implicit regularizer arises from viewing the SGD with label noise updates as an Ornstein-Uhlenbeck like process.  To explain this intuition, we begin by defining the notation and setup that will be used throughout the proof of Theorem~\ref{thm:main}, given in Section~\ref{sec:mainproof}.

\subsection{Preliminaries and Notation}
We consider training a model under stochastic gradient descent, with a quadratic loss function. Explicitly, we fit a parameter vector $\theta$ given training data consisting of pairs $(x_i,y_i)$ where $x_i$ is the $i^\text{th}$ input and $y_i\in\mathbb{R}$ is the corresponding label; a hypothesis function $h(x_i,\theta)$ describes our hypothesis at the $i^\text{th}$ training point. The resulting objective function, under $\ell_2$ loss, is \begin{equation}\label{eq:objective}\sum_i (h(x_i,\theta)-y_i)^2\end{equation} For convenience, we define the error on the $i^\text{th}$ training point to be $e_i(x_i,\theta)=h(x_i,\theta)-y_i$.

We consider stochastic gradient descent on the objective function expressed by Equation~\ref{eq:objective}, with training rate $\eta$, yielding the following update rule, evaluated on a randomly chosen data point $i$:
\begin{equation}\label{eq:gradient-def}
    \theta\leftarrow\theta-\eta\nabla_\theta(e_i(x_i,\theta)^2)
\end{equation}

Our analysis will examine a power series expansion of this SGD update rule with respect to $\theta$, centered around some point of interest, $\theta^*$.  Without loss of generality, and to simplify notation, we will assume $\theta^*=0$ and hence the power series expansions we consider will be centered at the origin. For notational convenience, we use $h_i$ to denote $h(x_i,0)$ and $e_i$ to denote $e(x_i,0)$. To denote derivatives along coordinate directions, we use superscript letters, separated by commas for multiple derivatives: $h_i^j$ denotes the derivative of $h_i$ with respect to changing the $j^\text{th}$ parameter of $\theta$, and $h_i^{j,k}$ represents the analogous 2nd derivative along coordinates $j$ and $k$. (All derivatives in this paper are with respect to $\theta$, the second argument of $h$, since the input data, $\{x_i\}$ never changes.) As a final notational convenience for derivatives, we represent a directional derivative in the direction of vector $v$ with a superscript $v$, so thus $h_i^v=\sum_j v_j h_i^j$, where $v_j$ denotes the $j$th coordinate of $v$; analogously, $h_i^{v,v,j}$ is a 3rd derivative along directions $v$, $v$, and coordinate $j$, defined to equal $\sum_{k,\ell} v_k v_\ell h_i^{j,k,\ell}$.   In our proof of Theorem~\ref{thm:main}, we will only ever be considering directional derivatives in the direction of parameter vector $\theta.$

Our proof of Theorem~\ref{thm:main} will rely on an expansion of the SGD update rule (Equation~\ref{eq:gradient-def}) expanded to 3rd order about the origin. Explicitly, the $j^\text{th}$ coordinate of $\theta_j$ updates according to this equation by $\eta$ times the derivative in the $j^\text{th}$ direction of $e_i(x_i,\theta)^2$. The $k^\text{th}$ order term in the power series expansion of this expression will additionally have a $k^\text{th}$ order directional derivative in the direction $\theta$, and a factor of $\frac{1}{k!}$. Thus the $k^\text{th}$ order term will have one $j$ derivative and $k$ $\theta$ derivatives distributed across two copies of $e_i$; since $e_i(x_i,\theta)=h(x_i,\theta)-y_i$ and $y_i$ has no $\theta$ dependence, any derivatives of $e_i$ will show up as a corresponding derivative of $h_i$. Combining these observations yields the 3rd order expansion of the gradient descent update rule:

\begin{equation}\label{eq:theta_update-intro}
\theta_j\leftarrow \theta_j-2\eta e_i h_i^j - 2\eta (h^\theta_i h^j_i + e_i h^{j,\theta}_i)-\eta(h^j_i h^{\theta,\theta}_i + 2 h^{j,\theta}_i  h^\theta_i + e_i h^{j,\theta,\theta}_i) +O(\eta \theta^3), 
\end{equation}

\noindent where the final big-O term bounds all terms of 4th order and higher.  Throughout, we consider the asymptotics in terms of only the learning rate, $\eta<1$, and hence regard $\theta$, the number and dimension of the datapoints, the size of the network, and all derivatives of $h$ at the origin as being bounded by $O_{\eta}(1)$.  We are concerned with the setting where the label error, $e_i$, has an i.i.d. random component, and assume that this error is also bounded by $O(1)$.  Additionally, since we are restricting our attention to the neighborhood of a point with zero training error, we have that for each $i$, the expectation of $e_i$ is 0.

\subsection{Diagonalizing the exponential decay term}\label{sec:diag}
The 2nd term after $\theta_j$ on the right hand side of the update rule in Equation~\ref{eq:theta_update-intro} is $-2\eta h_i^\theta h_i^j = -2\eta \sum_k \theta_k h_i^k h_i^j$. Ignoring the $-2\eta$ multiplier, this expression equals the vector product of the $\theta$ vector with the $j^\text{th}$ column of the (symmetric) positive semidefinite matrix whose $(j,k)$ or $(k,j)$ entry equals $h_i^j h_i^k$. The expectation of this term, over a random choice of $i$ and the randomness of the label noise, can be expressed as the positive semidefinite matrix $\E_i[h_i^j h_i^k]$, which will show up repeatedly in our analysis. Since this matrix is positive semidefinite, we choose an orthonormal coordinate system whose axes diagonalize this matrix. Namely, without loss of generality, we take $\E_i[h_i^j h_i^k]$ to be a diagonal matrix. We will denote the diagonal entries of this matrix as $\gamma_j=\E_i[h_i^j h_i^j]\geq 0$. Thus, this term of the update rule for $\theta_j$ reduces to $-2\eta\gamma_j\theta_j$ in expectation, and hence this terms corresponds to an exponential decay towards 0 with time constant $1/(2\eta\gamma_j)$.  And for directions with $\gamma_j=0,$ there is no decay. 

Combined with the 1st term after $\theta_j$ on the right hand side of the update rule in Equation~\ref{eq:theta_update-intro}, namely $-2\eta e_i h_i^j$, whose main effect when $e_i$ has expectation near 0 is to add noise to the updates, we have what is essentially an Ornstein-Uhlenbeck process; the 2nd term, analyzed in the previous paragraph, plays the role of mean-reversion. However, because of the additional terms in the update rule, we cannot simply apply standard results, but must be rather more careful with our bounds. However the (multi-dimensional) Ornstein-Uhlenbeck process can provide valuable intuition for the evolution of $\theta$.

\subsection{Intuition behind the implicit regularizer}

Recall that we defined the implicit regularizer of Equation~\ref{eq:regularizer-intro} to be the square of the length of the gradient of the hypothesis with respect to the parameter vector, summed over the training data.  Hence, in the above notation, it is proportional to: \begin{equation}\label{eq:regularizer}\sum_k \E_i[h_i^k h_i^k]\end{equation} The claim is that stochastic gradient descent with label noise will act to minimize this quantity once the optimization has reached the training error 0 regime.  The mechanism that induces this implicit regularization is subtle, and apparently novel. As discussed in Section~\ref{sec:diag}, the combination of the first 2 terms of the $\theta_j$ update in Equation~\ref{eq:theta_update-intro} acts similarly to a multidimensional Ornstein-Uhlenbeck process, where the noise added by the first term is countered by the exponential decay of the second term, converging to a Gaussian distribution of fixed radius. The singular values $\gamma_j$ (defined in Section~\ref{sec:diag}) control this process in dimension $j$, where---ignoring the remaining update terms for the sake of intuition---the Ornstein-Uhlenbeck process will converge to a Gaussian of radius $\Theta(\sqrt{\eta})$ in each dimension for which $\gamma_j>0$.  Crucially, this limiting Gaussian is isotropic!  The variance in direction $j$ depends only on the variance of the label noise and does not depend on $\gamma_j$, and the different dimensions become uncorrelated.  The convergence time, for the dynamics in the $j$th direction to converge to a Gaussian, however, is  $\Theta(\frac{1}{\eta\sqrt{\gamma_j}})$, which varies inversely with $\gamma_j$.

Crucially, once sufficient time has passed for our quasi-Ornstein-Uhlenbeck process to appropriately converge, the expectation of the 5th term of the update in Equation~\ref{eq:theta_update-intro}, $\E[-2\eta h_i^{j,\theta}h_i^\theta]=-2\eta \E[\sum_{k,\ell} \theta_k \theta_\ell h_i^{j,k} h_i^\ell]$, takes on a very special form. Assuming for the sake of intuition that convergence occurs as described in the previous paragraph, we expect each dimension of $\theta$ to be uncorrelated, and thus the sum should consist only of those terms where $k=\ell$, in which case $\E[\theta_k^2]$ should converge to a constant (proportional to the amount of label noise) times $\eta$. Namely, the expected value of the 5th term of the Equation~\ref{eq:theta_update-intro} update for $\theta_j$ should be proportional to the average over data points $i$ of $-2\eta^2 \sum_k h_i^{j,k}h_i^k$, and this expression is seen to be exactly $-\eta^2$ times the $j$ derivative of the claimed regularizer of Equation~\ref{eq:regularizer}. In short, subject to the Ornstein-Uhlenbeck intuition, the 5th term of the update for $\theta_j$ behaves, in expectation, as though it is performing gradient descent on the regularizer, though with a training rate an additional $\eta$ times slower than the rate of the overall optimization.

To complete the intuition, note that the rank of the matrix $\E_i[h_i^j h_i^k]$ is at most the number of datapoints, and hence for any over-parameterized network, there will be a large number of directions, $j$, for which $\gamma_j = 0.$  For sufficiently small $\eta$---any value that is significantly smaller than the smallest nonzero $\gamma_j$---the update dynamics will look roughly as follows:  after $\gg 1/\eta$ updates, for any directions $k$ and $\ell$ with $\gamma_k, \gamma_{\ell}>0$, we have $\E[\theta_k^2]\approx \E[\theta_{\ell}^2] = \Theta(\eta),$ and for $k \neq \ell,$ we have $\E[\theta_k \theta_{\ell}] = 0.$  The update term responsible for the regularization, $2h_i^{j,\theta}h_i^{\theta}$ will not have a significant effect for the directions, $j$, for which $\gamma_j>0$, as these directions have significant damping/mean-reversion force and behave roughly as in the Ornstein-Uhlenbeck process, as argued above.  However, for a direction $j$ with $\gamma_j =0,$ there is no restoring force, and the effects of this term will add up, driving $\theta$ consistently in the direction of the implicit regularizer, restricted to the span of dimensions, $j$, for which $\gamma_j=0$. The full proof of Theorem~\ref{thm:main} is stated in Appendix~\ref{main-theorem-proof}.

\bigskip\noindent{\Large\bf Acknowledgements}\medskip

We would like to thank Hongyang Zhang for suggesting the matrix sensing experiment. The contributions of Guy, Neha, and Gregory, were supported by NSF awards AF-1813049 and CCF-1704417, an ONR
Young Investigator Award, and DOE Award DE-SC0019205. Paul Valiant is partially supported by NSF award IIS-1562657.
\bibliographystyle{plain}
\bibliography{refs}

\newpage
\appendix
% !TEX root = main.tex
\section{Proof of Theorem~\ref{thm:main}}\label{sec:mainproof}%{0 error, $\gamma_j$ either 0 or $\Theta(1)$}
\label{main-theorem-proof}
This section contains the proofs of our general characterization of stable neighborhoods of points with zero training error, under the training dynamics of SGD with label noise.

See Section~\ref{sec:prelims} for notation, intuition, and preliminaries. In particular, the evolution of parameters $\theta$ is governed by the updates of Equation~\ref{eq:theta_update-intro}, which is a 3rd-order power series expansion about the origin. We analyze the regime where optimization has already yielded a parameter vector $\theta$ that is close to a parameter vector with 0 training error (in expectation, setting aside the mean-0 label noise). Without loss of generality and for ease of notation, we take this 0-error parameter vector to be located at the origin.

Our first lemma may be viewed as a ``bootstrapping'' lemma, saying that, if the parameter vector $\theta$ has remained loosely bounded in all directions, then it must be rather tightly bounded in those directions with $\gamma_j>0$. Each of the following lemmas applies in the setting where $\theta$ evolves according to stochastic gradient descent with bounded i.i.d. label noise.

\iffalse
Assuming $|e_i| = O(1).$

Without loss of generality, suppose our coordinate system has its axes aligned with the eigenvectors of the (symmetric) matrix with $j,k$ entry $\E_i[h_i^j h_i^k]$, which are identical to the (left) singular vectors of the matrix whose $i,j$ entry is $h_i^j$ (assuming the probability distribution over data points $i$ is uniform). Thus the matrix with $j,k$ entry $\E_i[h_i^j h_i^k]$ is diagonal, meaning that $\E_i[h_i^j h_i^k]$ is nonzero only when $j=k$. And these entries must also be nonnegative.   We denote these singular values as $\gamma_j : = \E_i[h_i^j h_i^j]$.

Assuming all derivatives are $O(1)$ and that $\theta=O(1)$:
\begin{equation}
\theta_j\leftarrow \theta_j-2\eta e_i h_i^j - 2\eta (h^\theta_i h^j_i + e_i h^{j,\theta}_i)-\eta(h^j_i h^{\theta,\theta}_i + 2 h^{j,\theta}_i  h^\theta_i + e_i h^{j,\theta,\theta}_i) +O(\eta \theta^3) \label{eq:theta_update}
\end{equation}
MAKE SURE WE GET THE SIGNS CORRECT!!!

ALSO, EXPLAIN THAT, wlog, WE ARE CENTERING THINGS AT $\theta = 0$.
\fi

%are we explaining label noise enough here???

\begin{lemma}\label{lemma:1}
Given constant $\eps>0$, and $T>0$, if it is the case that  $|\theta| \le \eta^{1/4 +\eps}$ for all $t \le T$, then for any $j$ s.t. $\gamma_j>0,$ it holds that with probability at least $1- exp(-poly(1/\eta))$ at time $T$, $|\theta_j| \le \|\theta(0)\|\cdot e^{-\Omega(\eta T)} + \eta^{1/2 - \eps},$ where $\theta(0)$ denotes the value of $\theta$ at time $t=0.$
\end{lemma}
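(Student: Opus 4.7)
The plan is to treat the dynamics in coordinate $j$ (with $\gamma_j>0$) as a discrete Ornstein-Uhlenbeck recursion, and then obtain the bound by unrolling plus martingale concentration. Starting from Equation~\ref{eq:theta_update-intro}, I would decompose the update for $\theta_j$ as
\begin{equation*}
\theta_j(t+1) = (1-2\eta\gamma_j)\,\theta_j(t) + \xi_t + \rho_t,
\end{equation*}
where $\xi_t$ is mean zero conditional on the history through time $t$, and $\rho_t$ is a small deterministic remainder. The diagonalization of Section~\ref{sec:diag} identifies the deterministic part of $-2\eta h_i^\theta h_i^j$ as $-2\eta\gamma_j\theta_j$, and absorbs its fluctuation $-2\eta(h_i^\theta h_i^j-\gamma_j\theta_j)$, of size $O(\eta\|\theta\|)=O(\eta^{5/4+\eps})$, into $\xi_t$. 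The terms $-2\eta e_i h_i^j$ (of size $O(\eta)$), $-2\eta e_i h_i^{j,\theta}$, and $-\eta e_i h_i^{j,\theta,\theta}$ all have mean zero over the label noise and also go into $\xi_t$. The remaining terms $-\eta h_i^j h_i^{\theta,\theta}$, $-2\eta h_i^{j,\theta}h_i^\theta$, and the $O(\eta\|\theta\|^3)$ remainder go into $\rho_t$. Under the hypothesis $\|\theta(t)\|\le\eta^{1/4+\eps}$ for $t\le T$, we then have $|\xi_t|=O(\eta)$ and $|\rho_t|=O(\eta\|\theta\|^2)=O(\eta^{3/2+2\eps})$.

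Next I would unroll the recursion to time $T$:
\begin{equation*}
\theta_j(T) = (1-2\eta\gamma_j)^T\,\theta_j(0) + \sum_{t=0}^{T-1}(1-2\eta\gamma_j)^{T-1-t}\bigl(\xi_t+\rho_t\bigr).
\end{equation*}
The first term is bounded by $\|\theta(0)\|\cdot(1-2\eta\gamma_j)^T\le\|\theta(0)\|\cdot e^{-\Omega(\eta T)}$, yielding the first piece of the target bound. The $\rho_t$ contribution is controlled by a geometric sum: $\sum_t(1-2\eta\gamma_j)^{T-1-t}\cdot O(\eta^{3/2+2\eps}) = O(\eta^{1/2+2\eps}/\gamma_j) = o(\eta^{1/2-\eps})$, so it is safely within tolerance.

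The principal work is bounding the weighted martingale $M=\sum_{t=0}^{T-1} a_t\xi_t$ with $a_t=(1-2\eta\gamma_j)^{T-1-t}$. Since the $\xi_t$ form a martingale difference sequence with $|\xi_t|=O(\eta)$ uniformly under the hypothesis, Azuma--Hoeffding gives
\begin{equation*}
\Pr\bigl[|M|>\lambda\bigr]\le 2\exp\!\Bigl(-\tfrac{\lambda^2}{2\,O(\eta^2)\sum_t a_t^2}\Bigr),
\end{equation*}
and the geometric estimate $\sum_t a_t^2\le 1/(1-(1-2\eta\gamma_j)^2)=O(1/(\eta\gamma_j))$ yields a sub-Gaussian variance proxy of $O(\eta/\gamma_j)$ for $M$. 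Setting $\lambda=\eta^{1/2-\eps}$ then gives failure probability $\exp(-\Omega(\eta^{-2\eps}))=\exp(-\mathrm{poly}(1/\eta))$, as required; combining with $|\theta_j(0)|\le\|\theta(0)\|$ yields the stated bound.

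The main obstacle is a bookkeeping one: the increments $\xi_t$ and $\rho_t$ are themselves functions of the stochastic $\theta(t)$, so the uniform bound $|\xi_t|=O(\eta)$ and the bound on $|\rho_t|$ both rely on the conditioning event $\{\|\theta(s)\|\le\eta^{1/4+\eps}\text{ for all }s\le T\}$, which is exactly what the lemma hypothesizes. A secondary subtlety is that several components of $\xi_t$ (the fluctuation of the mean-reversion term and the higher-derivative label-noise terms) scale as $\eta\|\theta\|$ rather than $\eta$, and hence produce even tighter tails; they are absorbed harmlessly into the dominant $O(\eta)$ label-noise term. Anything beyond Azuma--Hoeffding (e.g.\ Freedman's inequality for a cleaner dependence on $\gamma_j$) is not needed for the stated bound.
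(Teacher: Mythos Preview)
Your proposal is correct and follows essentially the same route as the paper: the same decomposition of the $\theta_j$ update into a contraction factor $(1-2\eta\gamma_j)$, a mean-zero increment bounded by $O(\eta)$, and a remainder of size $O(\eta^{3/2+2\eps})$; then unrolling, a geometric sum on the remainder, and Azuma--Hoeffding on the weighted martingale with $\sum_t a_t^2=O(1/\eta)$ and threshold $\eta^{1/2-\eps}$. The only cosmetic difference is notation ($\xi_t,\rho_t$ versus the paper's $z_t,w_t$); even the minor looseness of putting the stochastic-in-$i$ quadratic terms into the ``deterministic'' remainder matches the paper, and is harmless since those terms are $O(\eta\|\theta\|^2)$ uniformly in $i$.
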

\begin{proof}
For convenience, we restate the update formula for $\theta$, given in Equation~\ref{eq:theta_update-intro}, where $i$ is the randomly chosen training data index for the current SGD update:
\begin{equation}\label{eq:theta_update}
\theta_j\leftarrow \theta_j-2\eta e_i h_i^j - 2\eta (h^\theta_i h^j_i + e_i h^{j,\theta}_i)-\eta(h^j_i h^{\theta,\theta}_i + 2 h^{j,\theta}_i  h^\theta_i + e_i h^{j,\theta,\theta}_i) +O(\eta \theta^3). 
\end{equation}
We will reexpress the update of $\theta_j$ as \begin{equation}
\theta_j(t) = (1-2\eta \gamma_j)\theta_j(t-1) + z_{t-1} + w_{t-1}, \label{eq:new_update}
\end{equation}
where $z_{t-1}$ will be a mean zero random variable (conditioned on $\theta(t-1)$), whose magnitude is bounded by $O(\eta),$ and $w_{t-1}$ is an error term that depends deterministically on $\theta(t-1)$.

To this end, consider the expectation of the third term of the update in Equation~\ref{eq:theta_update}: $\E_i [h_i^{\theta}h_i^j] = \sum_k \E_i[\theta_k h_i^k h_i^j] = \theta_j \gamma_j,$ as we chose our basis such that $\E_i[h_i^k h_i^j]$ is either 0 if $k \neq j$ and $\gamma_j$ if $k=j$.  Hence this third term, together with the first term, gives rise to the $(1-2\eta \gamma_j)\theta_j(t-1)$ portion of the update, in addition to a  contribution to $z_{t-1}$ reflecting the deviation of this term from its expectation.  For $|\theta| = O(1),$ this zero mean term will trivially be bounded in magnitude by $O(\eta).$  The remaining contributions in the update to $z_{t-1}$ all consist of a factor of $\eta$ multiplied by some combination of $e_i$, powers of $\theta$, and derivatives of $h$, each of which are constant, yielding an overall bound of $z_{t-1}=O(\eta)$.

Finally, we bound the magnitude of the error term, $w_{t-1} = -\eta(h_i^j h_i^{\theta,\theta}+2h_i^{j,\theta} h_i^{\theta})+O(\eta \theta^3)$.  The first two terms are trivially bounded by $O(\eta |\theta|^2),$ and hence since $|\theta(t-1)| \le \eta^{1/4+\eps},$ we have that $|w_{t-1}| = O(\eta \cdot \eta^{1/2 + 2\eps}) = O(\eta^{3/2+2\eps}).$ 

Given the form of the update of  Equation~\ref{eq:new_update}, we can express $\theta(T)$ as a weighted sum of the $\theta(0)$,  $z_0,\ldots,z_{T-1},$ and $w_0,\ldots,w_{T-1}$. Namely letting $\alpha = (1-2\eta\gamma_j),$ we have
$$ \theta(T) = \alpha^T \theta(0) + \sum_{t=0}^{T-1} \alpha^{T-t-1}(z_t +w_t).$$
We begin by bounding the contribution of the error term, $$\sum_{t=0}^{T-1} \alpha^{T-t-1}(w_t) = O(\frac{1}{1-\alpha}\cdot \max_t |w_t| ) = O(\frac{1}{\eta}\eta^{3/2+2\eps}) = O(\eta^{1/2+2\eps}),$$ where the $\frac{1}{1-\alpha}$ term is due to the geometrically decaying coefficients in the sum.

To bound the contribution of the portion of the sum involving the $z_t$'s, we apply a basic martingale concentration bound.  Specifically, note that by defining $Z_t = \sum_{i=0}^{t-1} \alpha^{T-i-1}z_i,$ we have that $\{Z_t\}$ is a martingale with respect to the sequence $\{\theta(t)\},$ since the expectation of $z_{t}$ is 0, conditioned on $\theta(t)$.  We now apply the Azuma-Hoeffding martingale tail bound that asserts that, provided $|Z_t-Z_{t-1}| \le c_t$, $\Pr[|Z_T| \ge \lambda] \le 2 e^{-\frac{\lambda^2}{2\sum_t c_t^2}}.$   In our setting, $c_t = \alpha^{T-t-1}|z_{t-1}|,$ and hence $\sum_t c_t^2 = O(1/(1-\alpha^2)\max_t|z_t^2|) = O(\frac{1}{\eta}\eta^2)=O(\eta).$  Hence for any $c>0,$ by taking $\lambda = c\eta^{1/2}$ we have that $\Pr[|Z_T| \ge c \eta^{1/2}] \le 2 e^{O(c^2)}.$  By taking $c=1/\eta^{\eps},$ our proof is concluded.
\end{proof}

\subsection{Analysis of concentration of the time average of $\theta_j\theta_k$ in the $\gamma_j>0$ directions}

The following lemma shows that, at time scales $\gg 1/\eta,$ the average value of the empirical covariance, $\theta_k\theta_{\ell},$ concentrates for directions $k,\ell$ satisfying $\gamma_j+\gamma_{\ell}>0.$  The proof of this lemma can be viewed as rigorously establishing the high-level intuition described in Section~\ref{sec:prelims}, that for each directions $k$ with $\gamma_k>0$, the behavior of $\theta_k$ is as one would expect in an Ornstein-Uhlenbeck process with time constant $\Theta(1/\eta).$

\begin{lemma}\label{lemma:var_concentration}
Let $T \ge 1/\eta^{1.25}$ denote some time horizon, and assume that, at all $t < T,$ we have that $|\theta(t)| \le R\triangleq\eta^{0.5-\beta},$ and for every direction $j$ for which $\gamma_j>0,$ we have $|\theta_j(t)| \le R_{\gamma>0}=\eta^{0.5-\eps},$ for some constants $\frac{1}{12}>\beta > \eps > 0.$ Then for any pair of directions $j\neq k$ such that at least one of $\gamma_j$ or $\gamma_k$ is positive, we have that 
$$\Pr\left[ \left|\frac{1}{T}\sum_{t=0}^T \theta_j \theta_k (t)\right| \ge \eta^{1.25-2\eps-1.5\beta} \right] = O(e^{\eta^{-\eps}}).$$  Similarly for any direction $j$ with $\gamma_j>0$, we have that  $$\Pr\left[ \left|\eta \Var[e_i]-\frac{1}{T}\sum_{t=0}^T \theta_j^2(t)\right| \ge \eta^{1.25-0.5\eps-2\beta} \right] = O(e^{ \eta^{-\eps}}).$$
\end{lemma}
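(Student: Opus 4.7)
The plan is to treat the second-moment dynamics in the diagonalizing basis of Section~\ref{sec:diag} as a perturbed scalar linear recursion with mean-reversion rate $\Theta(\eta)$, solve it by telescoping, and apply a martingale tail bound to the remaining fluctuation—thereby upgrading the Ornstein–Uhlenbeck intuition of Section~\ref{sec:prelims} to a time-average concentration statement. Following the decomposition used in Lemma~\ref{lemma:1}, rewrite the coordinate update from Equation~\ref{eq:theta_update-intro} as $\theta_j(t+1)=(1-2\eta\gamma_j)\theta_j(t)+\zeta_{t,j}+\omega_{t,j}$, where $\zeta_{t,j}$ has conditional mean zero (dominant piece $-2\eta e_i h_i^j$, hence $|\zeta_{t,j}|=O(\eta)$) and $\omega_{t,j}$ is a deterministic-given-$\theta(t)$ residue of size $O(\eta R^2)$ arising from the quadratic and cubic pieces of Equation~\ref{eq:theta_update-intro}. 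Multiplying the $j$- and $k$-updates and taking conditional expectation yields
\begin{equation*}
\E\bigl[\theta_j(t+1)\theta_k(t+1)\mid\theta(t)\bigr]=\alpha_{jk}\,\theta_j(t)\theta_k(t)+\kappa_{jk}+r_{t,jk},
\end{equation*}
with $\alpha_{jk}=(1-2\eta\gamma_j)(1-2\eta\gamma_k)$, leading noise covariance $\kappa_{jk}=4\eta^2\,\Var[e_i]\,\E_i[h_i^j h_i^k]=4\eta^2\,\Var[e_i]\,\gamma_j\,\mathbb{1}[j=k]$ by the diagonalization, and residual $|r_{t,jk}|=O(\eta R^3+\eta^2 R)$ collecting $\theta\omega$, $\omega\omega$, and subleading $\zeta\zeta$ contributions. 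Iterating, the stationary value $\kappa_{jk}/(1-\alpha_{jk})$ equals $\eta\,\Var[e_i]+O(\eta^2)$ for $j=k$ with $\gamma_j>0$, and vanishes for $j\neq k$ with at least one $\gamma$ positive, since then $1-\alpha_{jk}\ge 2\eta\max(\gamma_j,\gamma_k)>0$.

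Telescoping the actual (not merely expected) recursion gives
\begin{equation*}
\theta_j(T)\theta_k(T)=\alpha_{jk}^{T}\,\theta_j(0)\theta_k(0)+\sum_{s=0}^{T-1}\alpha_{jk}^{T-1-s}\bigl(\kappa_{jk}+r_{s,jk}+\xi_{s,jk}\bigr),
\end{equation*}
where $\xi_{s,jk}:=\theta_j(s+1)\theta_k(s+1)-\E[\theta_j(s+1)\theta_k(s+1)\mid\theta(s)]$ is a martingale difference sequence. Averaging over $t=0,\dots,T$ and swapping the order of summation converts the $\kappa_{jk}$-term into the stationary value $\kappa_{jk}/(1-\alpha_{jk})$ plus an $O(1/(T\eta))$ initial transient, bounds the $r$-contribution by $O\bigl(\max_s|r_{s,jk}|/(1-\alpha_{jk})\bigr)=O(R^3+\eta R)$, and bounds the initial-value contribution by $O(R^2/(T\eta))$; with $T\ge 1/\eta^{1.25}$ and $\beta<1/12$, each of these deterministic pieces sits below both claimed thresholds. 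The remaining fluctuation $\frac{1}{T+1}\sum_s\frac{1-\alpha_{jk}^{T-s}}{1-\alpha_{jk}}\xi_{s,jk}$ is a martingale whose increments are bounded by $\max_s|\xi_{s,jk}|/(T(1-\alpha_{jk}))$; a short expansion shows $|\xi_{s,jk}|$ is dominated by $2(1-2\eta\gamma_j)\theta_j(s)\zeta_{s,k}+2(1-2\eta\gamma_k)\theta_k(s)\zeta_{s,j}$, of size $O(\eta R_{\mathrm{eff}})$, where $R_{\mathrm{eff}}$ denotes the coordinate bound ($R_{\gamma>0}$ if $\gamma>0$, otherwise $R$) of the ``noisier'' of the two indices. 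Applying Azuma–Hoeffding with confidence parameter $\eta^{-\eps}$, exactly as in the final paragraph of Lemma~\ref{lemma:1}'s proof, yields the claimed deviations with failure probability $O(e^{-\eta^{-\eps}})$ after substituting the appropriate combinations of $R$ and $R_{\gamma>0}$ in each regime.

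The main obstacle is the careful bookkeeping across the three regimes—$j=k$ with $\gamma_j>0$; $j\neq k$ with both $\gamma$ positive; $j\neq k$ with exactly one $\gamma$ positive—so that every contributing term (initial transient, $r$-floor, and martingale tail) individually lies below the target using the correct combination of $R$ and $R_{\gamma>0}$ bounds; this is what produces the asymmetric exponents $1.25-2\eps-1.5\beta$ (off-diagonal, where one factor may be the looser $R$) versus $1.25-0.5\eps-2\beta$ (diagonal, where both factors are $R_{\gamma>0}$). A secondary subtlety is verifying that the subleading pieces of $\zeta_{t,j}$ coming from the $h_i^{j,\theta}e_i$ and $h_i^{j,\theta,\theta}e_i$ parts of Equation~\ref{eq:theta_update-intro}, each of which carries an extra factor of $\theta=O(R)$, contribute at most $O(\eta^2 R)$ to $\kappa_{jk}$ and hence shift the time-averaged stationary value by at most $O(\eta R)$, well below the thresholds given $\beta<1/12$.
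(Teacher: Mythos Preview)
Your framework—the second-moment recursion with mean-reversion factor $\alpha_{jk}$, stationary piece $\kappa_{jk}$, deterministic residual $r$, and martingale fluctuation $\xi$—matches the paper's exactly, and the identification of the stationary value $\eta\,\Var[e_i]\,\mathbb{1}[j=k]$ via the diagonalization is correct. The gap is in the concentration step. Applying Azuma directly to the telescoped sum $\sum_s a_s\xi_s$ with $a_s\le 1/\bigl(T(1-\alpha_{jk})\bigr)=O(1/(T\eta))$ is precisely the ``naive application'' the paper says ``will not suffice.'' Concretely, the dominant piece of $\xi_s$ is the cross term $\theta_\cdot(s)\cdot(-2\eta e_i h_i^\cdot)$, which has size $\Theta(\eta R_{\mathrm{eff}})$, so each increment is $O(R_{\mathrm{eff}}/T)$ and Azuma with confidence $e^{-\eta^{-\eps}}$ gives a deviation of order $\eta^{-\eps/2}R_{\mathrm{eff}}/\sqrt{T}$. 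At the lower endpoint $T=\eta^{-1.25}$ this is only $\eta^{1.125-O(\eps,\beta)}$, short of the stated $\eta^{1.25-O(\eps,\beta)}$ thresholds across the full allowed range $0<\eps<\beta<1/12$ (take $\beta,\eps$ small to see the failure).

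The paper repairs this by replacing the step-by-step martingale with a \emph{batched} Doob martingale $Z_i=\E\bigl[\tfrac{1}{T}\sum_t\theta_j\theta_k(t)\,\big|\,\theta(0),\dots,\theta(iS)\bigr]$ for $S\gg 1/\eta$. The key additional ingredient is Equations~\eqref{eq:sim1}--\eqref{eq:sim2}: once $t'-t\gg 1/\eta$, the conditional expectation $\E[\theta_j\theta_k(t')\mid\theta(t)]$ has forgotten $\theta(t)$ down to a floor of $O(R^3+\eta R)$. Hence revealing an entire batch changes the expected contribution of all terms beyond the next batch by only $O(R^3+\eta R)$ in total—not by the $O\bigl(R_{\mathrm{eff}}^2/(1-\alpha_{jk})\bigr)$ that your per-step weights implicitly pay. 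The batched martingale difference is therefore $O\bigl(S R_{\gamma>0}R_{\mathrm{eff}}/T\bigr)+O(R^3+\eta R)$ over $T/S$ steps, and optimizing $S$ to balance the two pieces is what produces the $1.25$ in the exponents. Without the batching, the $1/(1-\alpha_{jk})\sim 1/\eta$ amplification applied to every $\xi_s$ is too expensive, and the argument as written does not close.
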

\begin{proof}
Given the update described by Equation~\ref{eq:theta_update}, we derive the following update for the evolution of the second moments of $\theta:$
\begin{align*}\theta_j\theta_k&\leftarrow \theta_j\theta_k-\theta_k \eta(2 e_i h_i^j + 2 (h^\theta_i h^j_i + e_i h^{j,\theta}_i)) -  \theta_j \eta(2 e_i h_i^k + 2 (h^\theta_i h^k_i + e_i h^{k,\theta}_i))+ 4\eta^2 e_i^2 h^j_i h^k_i\\
&+O(\eta \theta (\eta+\theta^2) ).\end{align*}
As in the proof of Lemma~\ref{lemma:1}, we will reexpress this as the sum of three terms: a mean-reversion term, a term with zero expectation (conditioned on the previous value of $\theta$), and an error term.  To this end we analyze each of the above terms.  Each term that has an $e_i$ but not an $e_i^2$ will have expectation 0, and the magnitude of these terms is trivially bounded by $O(\eta |\theta|)$. 

The first nontrivial term is $2\theta_k \eta h_i^{\theta}h_i^j.$  Splitting this into a mean zero portion, and its expectation, we see that 
$\E_i[\eta \theta_k h^\theta_i h^j_i]= \eta \theta_k \E_i[h^\theta_i h^j_i] =\eta \theta_k \E_i[\sum_\ell \theta_\ell h^\ell_i h^j_i].$  Since $\E_i[h^\ell_i h^j_i]$ is 0 unless $\ell=j$, we simplify the above expression to $\eta \theta_k \E_i[\theta_j h^j_i h^j_i]=\eta \theta_k \theta_j \gamma_j.$  Hence this term contributes $-2\eta \theta_k \theta_j \gamma_j$ to the ``mean reversion'' term, and $|2\theta_k \eta h_i^{\theta}h_i^j| = O(\eta |\theta|^2)$ to the bound on the zero mean term.  An analogous argument holds for the symmetric term, $2\theta_j \eta h_i^{\theta}h_i^k,$ which together account for the full mean reversion portion of the update: $\theta_j \theta_k \leftarrow (1 - 2 \eta(\gamma_j+\gamma_k))\theta_j \theta_k+ \ldots.$

Other than the zero expectation terms and the final big ``O'' term, the only remaining term in the update is the $4 \eta^2 e_i^2 h_i^j h_i^k$ term.  Since the error $e_i$ is i.i.d. mean 0 label noise,  we have that the expectation of this term is $E_i[\eta^2 e_i^2 h^j_i h^k_i]= \eta^2 Var[e_i] \gamma_j$, if $j=k$, and 0 if $j \neq k.$  The magnitude of this term is trivially bounded by $O(\eta^2).$

Summarizing, we have the following expression for the update of the variance in the case that $j \neq k$:
\begin{equation}
\theta_j\theta_k(t) = (1-2\eta(\gamma_j+\gamma_k))\theta_j\theta_k(t-1) + z_{t-1} + w_{t-1}, \label{eq:new_var_update}
\end{equation}
and in the case that $j=k,$ we have the following update:
\begin{equation}
\theta_j^2(t) = (1-4 \eta \gamma_j)\theta_j^2(t-1) + 4 \eta^2 \gamma_j \Var[e_i] + z_{t-1} + w_{t-1}, \label{eq:new_var_update_j}
\end{equation}
where the stochastic term $z_{t-1}$ given $\theta(t-1)$, has expectation 0 and magnitude bounded by $|z_{t-1}| =O(\eta |\theta(t-1)|+\eta^2)$, and the deterministic term $w_{t-1}$ has magnitude bounded by $|w_{t-1}| = O(\eta |\theta(t-1)|^3 + \eta^2 |\theta(t-1)|).$

\medskip

We now turn to showing the concentration in the average value of these covariance terms.  The argument will leverage the martingale concentration of the Doob martingale corresponding to this time average, as the values of $\theta$ are revealed.  A naive application, however, will not suffice, as we will not be able to bound the martingale differences sufficiently tightly.  We get around this obstacle by considering the martingale corresponding to revealing entire batches of $S \gg 1/\eta^{1+\eps}$ updates at once.  Hence each step of the martingale will correspond to $S$ updates of the actual dynamics.  The utility of this is that the mean-reversion of the updates operates on a timescale of roughly $1/\eta$---namely after $O(1/\eta)$ timesteps, the updates have mostly ``forgotten'' the initial value $\theta(0)$.  The martingale differences corresponding to these large batches will be fairly modest, due to this mean reversion, and hence we will be able to successfully apply an Azuma-Hoeffding bound to this more granular martingale.

Given some time horizon $T>S>0$, we consider the Doob martingale $Z_0,Z_1,\ldots,Z_{T/S}$ defined by $$Z_i = \E\left[ \sum_{t=0}^T \theta_j\theta_k(t) | \theta(0),\theta(1),\ldots,\theta(i\cdot S) \right].$$  In words, $Z_i$ is the expected average value of $\theta_j \theta_k$ over the first $T$ steps, conditioned on having already seen $iS$ updates of the dynamics.  To analyze the martingale differences for this Doob martingale, it will be helpful to understand what Equations~\ref{eq:new_var_update} and ~\ref{eq:new_var_update_j} imply about the expectation of $\theta_j\theta_k(t')$, given the value of $\theta(t)$ at some $t<t'$.  Letting $\alpha$ denote the mean reversion strength, namely $\alpha:=2\eta(\gamma_j+\gamma_k)$, or $\alpha := 4 \eta \gamma_j$ in the case that we are considering $\theta_j^2$, we have the following expressions for the expectations respectively: 
$$\E[\theta_j\theta_k(t') | \theta(t)] = \left(\theta_j \theta_k(t)\right)(1-\alpha)^{t'-t} + O\left(\min(t'-t, \frac{1}{\alpha})\cdot (\eta |\theta|^3+\eta^2 |\theta|) \right).$$
$$\E[\theta_j^2(t') | \theta(t)] = \left(\theta_j^2(t)\right)(1-\alpha)^{t'-t} + \left(4 \eta^2 \gamma_j \Var[e_i]\right)\frac{1-(1-\alpha)^{t'-t}}{\alpha} + O\left(\min(t'-t, \frac{1}{\alpha})\cdot (\eta |\theta|^3+\eta^2 |\theta|) \right).$$

For any constant $\eps>0$ and $t' \ge t+1/\eta^{1+\eps},$ and any pair of directions, $j,k$ where $\gamma_j+\gamma_k > 0,$ assuming that $|\theta| \le R$ until time $t'$, we have that the the above two equations simplify to:

\begin{equation}
    \E[\theta_j\theta_k(t') | \theta(t)] = O\left(\frac{1}{\alpha} (\eta R^3+\eta^2 R) \right)
    \label{eq:sim1}
    \end{equation}
    \begin{equation}
\E[\theta_j^2(t') | \theta(t)] = \frac{4\eta^2 \gamma_j \Var[e_i]}{\alpha} + O\left(\frac{1}{\alpha}\left(\eta R^3+\eta^2 R \right)\right). \label{eq:sim2}
\end{equation}

Equipped with the above expressions for the conditional expectations, we now bound the martingale differences of our Doob martingale $\{Z_i\}.$  Revealing the values of $\theta$ at times $t=1+i\cdot S$ to $t=(i+1)\cdot S,$ affects the value of $Z_{i+1}$ in three ways: 1) This pins down the exact contributions of $\theta$ at these timesteps to the sum $\frac{1}{T}\sum \theta_j\theta_k$, namely it fixes $\frac{1}{T}\sum_{\ell = 1+iS}^{(1+i)S} \theta_j \theta_k(\ell)$;  2) This alters the expected contribution of the next batch of $S$ terms, namely $\frac{1}{T}\sum_{\ell = 1+(i+1)S}^{(2+i)S} \theta_j \theta_k(\ell)$; and 3) it alters the expected contribution of the remaining terms, $\frac{1}{T}\sum_{\ell = 1+(i+2)S}^T \theta_j \theta_k(\ell)$.  

We now bound the contribution to the martingale differences of each of these three effects of revealing $\theta(1+iS),\ldots,\theta((1+i)S).$  Assuming that, until time $T$ we have $|\theta_j| \le R_{\gamma>0}$ for any $j$ with $\gamma_j > 0,$ and $|\theta_k| \le R$ for every direction, $k$, we can trivially bound the contribution of 1) and 2) towards the martingale differences by $O(S R_{\gamma>0}R/T)$, and $O(S R_{\gamma>0}^2/T)$, in the respective cases where we are considering $\theta_j \theta_k$ where both $\gamma_j$ and $\gamma_k$ are positive, and the case where exactly one of them is nonzero.  This is because at each of the $2S$ timesteps that cases 1) and 2) are considering, each of the terms in the sum is absolutely bounded by $\frac{R_{\gamma>0}^2}{T}$ and $\frac{R_{\gamma>0} R}{T}$ in the respective cases.   For the third case, we leverage Equations~\ref{eq:sim1} and~\ref{eq:sim2}, which reflect the fact that, conditioning on $\theta((i+1)S)$ has relatively little effect on $\theta(t)$ for $t \ge (i+2)S$.    Namely, the total effect over these at most $T$ terms is at most $O(T \frac{1}{T}\frac{1}{\alpha}(\eta R^3+\eta^2 R))= O(R^3 +\eta R).$

Hence the overall martingale differences for $\{Z_i \}$ are bounded by $$O\left(\frac{S R_{\gamma>0}^2}{T}+ R^3 +\eta R \right), \text{   or   } O\left(\frac{S R_{\gamma>0} R}{T}+ R^3 +\eta R \right),$$ depending on whether we are considering a term $\theta_j\theta_k$ corresponding to $\gamma_j,\gamma_k > 0$, or not (and note that the martingale \emph{difference} does not include a contribution from the variance term in Equation~\ref{eq:sim2} since this term has no dependence on $\theta$).  Hence, as our martingale has $T/S$ updates, by standard martingale concentration, letting $d$ denote a bound on the martingale differences, for any $c>0$, the probability that $\theta_j \theta_k(T)$ deviates from its expectation by more than $O\left(cd\sqrt{T/S}\right)$ decreases inverse exponentially with $c^2$.  In the case of $\theta_j^2$ for a direction $j$ with $\gamma_j > 0,$ we have that the differences $d = O\left(\frac{S R_{\gamma>0}^2}{T}+ R^3 +\eta R \right)$.  Hence for $R = \eta^{0.5 - \beta},$ and $R_{\gamma>0} = \eta^{0.5 - \eps},$ we have $d\sqrt{T/S} = O\left(\eta^{1-2\eps} \sqrt{S/T} + \eta^{1.5-3\beta}\sqrt{T/S}\right).$  Equating the two terms inside the big ``O'' results in choosing $S$ such that $\sqrt{S/T} = \eta^{0.25 - 1.5 \beta + \eps},$ in which case martingale bounds yield $$\Pr\left[|Z_{T/S}-Z_0| \ge \eta^{1.25-2\eps - 1.5 \beta}\right] \le \Pr\left[|Z_{T/S}-Z_0| \ge \eta^{-\eps}\cdot \Omega(d\sqrt{T/S})\right] \le  2e^{O(\eta^{-2\eps})}=O(e^{\eta^{-\epsilon}}).$$

 In the case of $\theta_j \theta_k$ where either $\gamma_j$ or $\gamma_k$ is nonzero, we have that the differences $d = O\left(\frac{S R_{\gamma>0}R}{T}+ R^3 +\eta R \right)$.  Hence for $R = \eta^{0.5 - \beta},$ and $R_{\gamma>0} = \eta^{0.5 - \eps},$ we have $d\sqrt{T/S} = O\left(\eta^{1-\eps-\beta} \sqrt{S/T} + \eta^{1.5-3\beta}\sqrt{T/S}\right).$  Equating these two terms results in choosing $S$ such that $\sqrt{S/T} = \eta^{0.25 - \beta + 0.5\eps},$ in which case $$\Pr\left[|Z_{T/S}-Z_0| \ge \eta^{1.25-0.5\eps - 2 \beta}\right] \le \Pr\left[|Z_{T/S}-Z_0| \ge \eta^{-\eps}\cdot \Omega(d\sqrt{T/S})\right] \le O(e^{\eta^{-\eps}}).$$
 
 To conclude the proof of the lemma, note that Equations~\ref{eq:sim1} implies that, in the case of $\theta_j\theta_k,$ $Z_0=\E[\frac{1}{T} \sum_{t=0}^T \theta_j \theta_k (t)] = O(R^3+\eta R) = o(\eta^{1.25}),$ and hence provided at least one of $\theta_j$ or $\theta_k$ is positive, we have: $$\Pr\left[ \left|\frac{1}{T}\sum_{t=0}^T \theta_j \theta_k (t)\right| \ge \eta^{1.25-2\eps-1.5\beta} \right]\le O(e^{\eta^{-\eps}}).$$  Similarly in the case of  $\theta_j^2,$ Equation~\ref{eq:sim2} implies that $Z_0=\E[\frac{1}{T} \sum_{t=0}^T \theta_j^2(t)] = \eta \Var[e_i] + O(R^3+\eta R) = \eta \Var[e_i]+o(\eta^{1.25}),$ and hence  $$\Pr\left[ \left|\eta \Var[e_i]-\frac{1}{T}\sum_{t=0}^T \theta_j^2(t)\right| \ge \eta^{1.25-0.5\eps-2\beta} \right]\le O(e^{\eta^{-\eps}}).$$
 \end{proof}

\subsection{Proof of Theorem~\ref{thm:main}}
The Proof of Theorem~\ref{thm:main} will follow easily from the following lemma, which characterizes the evolution of $\theta_j$ for directions $j$ for which $\gamma_j=0.$  This evolution crucially leverages the characterization of the average value of $\theta_k \theta_{\ell}$ given in Lemma~\ref{lemma:var_concentration}.  Given this characterization of the evolution of $\theta_j$, Lemma~\ref{lemma:1} shows that the directions $j$, for which $\gamma_j>0$, will stay bounded by $\approx \sqrt{\eta}$, completing the proof.

\begin{lemma}\label{lemma:reg_concentration}
If $\|\theta\|=O(\sqrt{\eta})$ at time 0, then for each direction $j$ with $\gamma_j = 0$,  with probability at least $1-O(exp(-1/poly(\eta))),$ after $T\le \eta^{-1.6}$ updates, we have $$\theta_j(T) = \theta_j(0)- 2 T\eta^{2}\Var[e_i]\left(\sum_{k: \gamma_k>0} \E_i[h_i^{j,k}h_i^{k}]\right) + O(\eta^{0.44}).$$  In the case that $T=\eta^{-1.6},$ this expression becomes $$\theta_j(0)-2 \eta^{0.4}\Var[e_i]\left(\sum_{k: \gamma_k>0} \E_i[h_i^{j,k}h_i^{k}]\right) + O(\eta^{0.44}).$$
 \end{lemma}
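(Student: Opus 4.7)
The plan is to integrate the per-step update of $\theta_j$ over the entire horizon $T\le \eta^{-1.6}$, show that its dominant deterministic drift matches the gradient of the implicit regularizer restricted to the $\gamma>0$ subspace, and control everything else as $O(\eta^{0.44})$ fluctuations. I will run a bootstrap: assume throughout the horizon that $\|\theta(t)\|\le R:=\eta^{1/2-\beta}$ and, for each direction $k$ with $\gamma_k>0$, $|\theta_k(t)|\le R_{\gamma>0}:=\eta^{1/2-\eps}$, with constants $0<\eps<\beta<1/12$ chosen later. Lemma~\ref{lemma:1} supplies the second bound once the first holds, and the first bound follows a posteriori from the conclusion I am proving (since $T\eta^2=\eta^{0.4}\ll R$). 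A stopping time $\tau$ equal to the first violation of either envelope lets me apply the auxiliary lemmas on $[0,\tau\wedge T]$ and then show $\tau>T$ with the stated probability.

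The crucial algebraic simplification is that $\gamma_j=\E_i[(h_i^j)^2]=0$ forces $h_i^j=0$ on every datapoint, so every term in Equation~\ref{eq:theta_update-intro} carrying an explicit $h_i^j$ factor vanishes. The update collapses to
\begin{equation*}
\theta_j(t+1)=\theta_j(t)-2\eta\,e_i\,h_i^{j,\theta(t)}-2\eta\,h_i^{j,\theta(t)}h_i^{\theta(t)}-\eta\,e_i\,h_i^{j,\theta(t),\theta(t)}+O(\eta\,\|\theta(t)\|^3).
\end{equation*}
I split the $-2\eta\,h_i^{j,\theta}h_i^{\theta}$ term into its conditional expectation over $i$, namely $-2\eta\sum_{k,\ell}\theta_k(t)\theta_\ell(t)\,\E_i[h_i^{j,k}h_i^\ell]$, plus a mean-zero residual. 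Summing over $t=0,\dots,T-1$ then produces a single deterministic drift sum, three martingale sums (the two $e_i$-linear pieces and the centered quadratic), and a cubic-in-$\theta$ remainder.

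The drift sum factors as $-2\eta\sum_{k,\ell}\E_i[h_i^{j,k}h_i^\ell]\,\sum_{t=0}^{T-1}\theta_k(t)\theta_\ell(t)$, and here Lemma~\ref{lemma:var_concentration} does the heavy lifting. Diagonal terms with $\gamma_k>0$ satisfy $\frac1T\sum_t\theta_k^2(t)=\eta\Var[e_i]+o(\eta)$; off-diagonal terms with $\gamma_k+\gamma_\ell>0$ have time-average $o(\eta)$; and whenever $\gamma_k=0$ or $\gamma_\ell=0$ the coefficient $\E_i[h_i^{j,k}h_i^\ell]$ itself vanishes, because the corresponding $h_i^k$ or $h_i^\ell$ is identically zero. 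Hence only the $k=\ell$ diagonal pairs with $\gamma_k>0$ survive, delivering exactly $-2T\eta^2\,\Var[e_i]\sum_{k:\gamma_k>0}\E_i[h_i^{j,k}h_i^k]$, which is the formula claimed in the lemma.

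All remaining contributions absorb into the $O(\eta^{0.44})$ slack. The $e_i$-linear martingale has differences $O(\eta\,\|\theta\|)=O(\eta^{3/2-\beta})$, so Azuma--Hoeffding across $T=\eta^{-1.6}$ steps yields a deviation of $O(\sqrt{T}\,\eta^{3/2-\beta})=O(\eta^{0.7-\beta})$ with probability $1-\exp(-\eta^{-\Omega(1)})$; the other two martingales have even smaller differences $O(\eta\,\|\theta\|^2)$; the cubic remainder sums to $T\eta\,\|\theta\|^3=O(\eta^{0.9-3\beta})$; and the slack $O(\eta^{1.25-O(\eps+\beta)})$ from Lemma~\ref{lemma:var_concentration}, multiplied by the $2\eta$ prefactor and summed over the $\E_i[h_i^{j,k}h_i^\ell]$ coefficients, contributes $O(T\eta\cdot\eta^{1.25-O(\eps+\beta)})=O(\eta^{0.65-O(\eps+\beta)})$. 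All three are $o(\eta^{0.44})$ once $\beta$ and $\eps$ are taken sufficiently small. The main obstacle is precisely the bootstrap coupling: Lemma~\ref{lemma:var_concentration} requires the envelope bounds on the whole interval, whereas the envelope hinges on the very conclusion being derived. I would close this loop by an inductive stopping-time argument using $\tau$ above: on $\{\tau\ge t\}$ the envelope holds, so the auxiliary lemmas apply and force $|\theta_j(t)-\theta_j(0)|=O(\eta^{0.4})+O(\eta^{0.44})$, which in turn keeps $\|\theta(t)\|$ well inside $R$ and yields $\tau>T$ with probability $1-\exp(-1/\mathrm{poly}(\eta))$.
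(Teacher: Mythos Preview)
Your overall structure mirrors the paper's: use $h_i^j=0$ to collapse the update, isolate the quadratic drift $-2\eta\sum_{k,\ell}r_{k,\ell}\theta_k\theta_\ell$, invoke Lemma~\ref{lemma:var_concentration} to evaluate the time-averaged $\theta_k\theta_\ell$, and control the rest by Azuma--Hoeffding plus crude cubic bounds. That part is fine. But the bootstrap does not close with the parameters you have chosen, and this is a genuine gap.

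You take the envelope $R=\eta^{1/2-\beta}$ with $\beta<1/12$, so $R<\eta^{5/12}$. The drift you derive has size $2T\eta^{2}\cdot\Theta(1)=\Theta(\eta^{0.4})$ at $T=\eta^{-1.6}$, and $\eta^{0.4}>\eta^{5/12}>R$ for small $\eta$. So your parenthetical ``since $T\eta^{2}=\eta^{0.4}\ll R$'' is false, and the stopping time $\tau$ will fire well before $T$ in the strongly-repellent case. The paper resolves this by running the induction with the looser envelope $\|\theta(t)\|\le\eta^{0.4-\eps}$, i.e.\ effectively $\beta=0.1+\eps$. That choice is just barely wide enough to swallow the $\Theta(\eta^{0.4})$ drift while keeping the cubic remainder $T\eta\|\theta\|^{3}=O(\eta^{0.9-3\beta})$ and the Lemma~\ref{lemma:var_concentration} slack $O(\eta^{0.65-O(\beta+\eps)})$ inside $O(\eta^{0.44})$. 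Yes, $\beta=0.1+\eps$ nominally exceeds the $\beta<1/12$ stated in Lemma~\ref{lemma:var_concentration}, but if you trace that lemma's proof you will see the constraint only enters through the $Z_0=O(R^3+\eta R)$ estimate, and with $\beta=0.1+\eps$ one still gets $Z_0=O(\eta^{1.2-3\eps})$, which is dominated by the concentration error $\eta^{1.1-3.5\eps}$; so the lemma's conclusion survives with the larger $\beta$. You need to either widen your envelope and note this, or redo the variance-concentration estimate directly with the correct $R$.

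Two smaller points. First, your claim that $r_{k,\ell}=\E_i[h_i^{j,k}h_i^{\ell}]$ vanishes whenever $\gamma_k=0$ is not right: $\gamma_k=0$ kills $h_i^k$, but the coefficient contains the \emph{second} derivative $h_i^{j,k}$, not $h_i^{k}$. The correct statement (which the paper uses) is that $r_{k,\ell}=0$ whenever $\gamma_\ell=0$, since the first-derivative factor is $h_i^\ell$. Fortunately your case~2 (off-diagonal with $\gamma_k+\gamma_\ell>0$) already absorbs the $\gamma_k=0,\gamma_\ell>0$ cross terms via Lemma~\ref{lemma:var_concentration}, so the conclusion stands; just fix the justification. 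Second, Lemma~\ref{lemma:var_concentration} requires $T\ge\eta^{-1.25}$, so you cannot invoke it uniformly over the whole horizon. The paper handles $t\le\eta^{-1.25}$ by a separate crude bound (using $|\theta_k\theta_\ell|\le R\cdot R_{\gamma>0}$ directly, which gives drift $O(\eta^{0.75-\beta-\eps})$ on that sub-interval), and only switches to Lemma~\ref{lemma:var_concentration} once $t\ge\eta^{-1.25}$. Your write-up should make that split explicit.
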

 \begin{proof}
% The proof will follow the same high-level approach as that of Lemma~\ref{lemma:no_reg_concentration}, except we will also need to consider the contribution of the regularizer term, $h_i^{j,\theta}h_i^{\theta},$ in the updates of Equation~\ref{eq:theta_update}.  Additionally, because we will have a significant drift because of this term, our bounds on the error terms will end up being larger than in the case of Lemma~\ref{lemma:no_reg_concentration}, and hence we will not be able to extend our analysis to time $T \approx \eta^{-2},$ and instead simply wish to analyze updates through time $T=\eta^{-1.6}$.
% 
% 
The proof will proceed by induction on time $t$, in steps of size $\eta^{-0.1}$.  Let $\epsilon$ be an arbitrary constant strictly between 0 and $\frac{1}{400}$. Assume that, up to some time $t_0$, for all directions $k$, we have  $|\theta_k(t)| \le \eta^{0.4-\eps}\leq \eta^{.25+\eps}$ for all $t \le t_0$.  Hence, by Lemma~\ref{lemma:1}, for all $t\le t_0,$ for any direction $k$ with $\gamma_k >0$, we have the tighter bound $|\theta_k(t)| \le \eta^{1/2- \eps}$ with all but inverse exponential probability. Consider advancing to some time $t_1 \in [t_0,t_0 + \eta^{-0.1}]$. Since only $\leq\eta^{-.1}$ time steps have passed, $\theta$ cannot have moved far, even using very weak bounds on the $\theta$ update. Explicitly, by assumption, all derivatives are bounded by $O(1)$, for directions $k$ with $\gamma_k>0$ and thus after $\leq \eta^{-0.1}$ additional steps of SGD we have $|\theta_k(t_1)| \le |\theta_k(t_0)| + O(\eta \cdot \eta^{-0.1}) \le 2\eta^{1/2 - \eps},$ with all but inverse exponential probability.  Analogously, by our assumption, we also have that $|\theta_j(t_1)| \le 2 \eta^{0.4-\eps},$ for every direction $j$, including those with $\gamma_j =0$. We now analyze the evolution of $\theta_j$ from time $0$ through time $t_1$, leveraging the above bounds on $|\theta_k|$ across all dimensions, $k$, to bootstrap even tighter bounds. % fact that, with high probability, at all timesteps in this range, $|\theta_k| \le 2\eta^{1/2-\eps}$ for $k$ with $\gamma_k > 0$, and $\|theta_k| \le 2 \eta^{0.4-\eps}$ for all directions, $k$.
 
We consider the update given in Equation~\ref{eq:theta_update}, and again, since we are considering a direction for which $h_i^j = 0$ for all $i$, there is no mean reversion term. Let $r_{k,\ell}:=\E_i[h_i^{j,k}h_i^\ell]$.  Note that for any $\ell$ with $\gamma_{\ell}=0,$ $r_{k,\ell}=0$.  We can reexpress the expectation of the corresponding portion of the update as $$\E_i[h_i^{j,\theta}h_i^{\theta}] = \sum_{k,\ell} r_{k,\ell} \theta_k \theta_{\ell}.$$  Analogously with the martingale analysis in Equations~\ref{eq:new_update},~\ref{eq:new_var_update}, or~\ref{eq:new_var_update_j}, we express the updates of $\theta_j$ as: \begin{equation}\label{eq:update-thm}\theta_j(t)=\theta_j(t-1) - 2 \eta \sum_{k,\ell}\theta_k(t-1)\theta_{\ell}(t-1)r_{k,\ell} + z_{t-1} + w_{t-1},\end{equation}
where $\E[z_{t-1}|\theta(t-1)]=0$ is a mean zero term, defined as
$$z_{t-1} = e_i\eta \left(-2 h_i^{j,\theta(t-1)} -  h_i^{j,\theta(t-1),\theta(t-1)}\right) -2 \eta\left(h_i^{j,\theta(t-1)}h_i^{\theta(t-1)} - \E_i[h_i^{j,\theta(t-1)}h_i^{\theta(t-1)}]\right).$$  Hence $|z_{t-1}| = O(\eta|\theta|)=\eta^{1.4-\eps}.$
The error term satisfies  $|w_{t-1}| = O\left(\eta |\theta(t-1)|^3\right) = O(\eta^{1+3(0.4-\eps)}),$ where the above analysis follows from inspection of the update rule in Equation~\ref{eq:theta_update}, simplifying using the fact that, in our context, $h^j_i=0$ for all $i$.

From our bound on $|z_t|$, the Azuma-Hoeffding martingale concentration bounds give that, $\Pr[|\sum_{t=0}^{t_1-1}z_t| \ge \eta^{1.4-2\eps}\sqrt{t_1}] \le 2e^{-c \eta^{-2\eps}}$.  Additionally, $\sum_{t=0}^{t_1-1} w_t = O(t_1 \eta^{2.2-3\eps}).$  If $t_1\le \eta^{-1.25},$ Lemma~\ref{lemma:var_concentration} does not apply, but we have that $\theta_k(t)\theta_{\ell}(t) = O(\eta^{1/2-\eps+0.4-\eps})$ as long as either $\gamma_k > 0$ or $\gamma_{\ell}>0,$ and hence $\eta \sum_{t=0}^{t_1}r_{k,\ell}\theta_k(t)\theta_{\ell}(t) = O(t_1 \eta^{1.9-2\eps}) = O(\eta^{-1.25+1.9-2\eps})=O(\eta^{1/2}),$ since $r_{k,\ell}=O(1)$.  If $t_1 \leq \eta^{-1.25},$ the martingale concentration also gives a bound of $|\sum_{t=0}^{t_1} z_t| =O(\eta^{1/2})$ with probability $1-O(exp(-1/poly(\eta)))$. Hence if $t_1 \leq \eta^{-1.25},$ then with probability $1-O(exp(-1/poly(\eta))),$ at all times $t\le t_1,$ we have that $|\theta_j| \le \eta^{0.4-\eps}.$ Thus inductively applying this argument, and taking a union bound over these $poly(1/\eta)$ steps, yields that this conclusion holds up through time $t_1=\eta^{-1.25}.$

We now consider the case when $t_1 \in [\eta^{-1.25}, \eta^{-1.6}]$.  In this case, we may apply Lemma~\ref{lemma:var_concentration}, with $\beta = 0.1+\eps$, which guarantees that for a direction $k$ with $\gamma_k >0$, with all but $exp(-1/poly(\eta))$ probability, \begin{equation}\label{eq:two-bounds}\frac{1}{t_1}\sum_{t=0}^{t_1-1}\theta_k^2(t) = \eta \Var[e_i] + O( \eta^{1.05-4\eps}) \text{  and  } \left| \frac{1}{t_1}\sum_{t=0}^{t_1-1}\theta_k(t)\theta_{\ell}(t) \right|= O(\eta^{1.05-4\eps}).\end{equation}  From above, we have that $\sum_{t=0}^{t_1} z_t = O(\eta^{1.4-2\eps}\sqrt{t_1})= O(\eta^{0.6-2\eps})=O(\eta^{1/2}),$ and $\sum_{t=0}^{t_1} w_t = O(t_1 \eta^{2.2-3\eps})=O(\eta^{1/2}).$  From Equation~\ref{eq:update-thm}, we plug in the two bounds from Equation~\ref{eq:two-bounds} multiplied by $\eta$ to conclude that $$\theta_j(t_1) = \theta_j(0)-2\eta^2 t_1 \Var[e_i]\sum_k r_{k,k} + O(t_1 \eta^{2.05-4\eps}) + O(\eta^{1/2}).$$   Note that the ``cross terms,'' $\sum_{k,\ell} r_{k,\ell}$ do not explicitly appear in the previous sum, and instead contribute to the first big ``O'' term, due to our bound on the time average of $\theta_k \theta_{\ell}$ from Lemma~\ref{lemma:var_concentration}.  

Applying the above conclusions inductively (as we did in the first half of the proof for the case $t_1\leq\eta^{-1.25}$) yields that, with all but $exp(-1/poly(\eta))$ probability, $|\theta_j(t)| \le \eta^{0.4-\eps}$ at all times $t \le \eta^{-1.6},$ and at time $T \le \eta^{-1.6},$ we have that $\theta_j(T) = \theta_j(0)-2\eta^2 T \Var[e_i]\sum_k r_{k,k} + O(T \eta^{2.05-4\eps}) + O(\eta^{1/2}) = \theta_j(0)- 2 T\eta^{2}\Var[e_i]\sum_k r_{k,k} + O(\eta^{0.45-4\eps}),$ yielding the lemma, as desired.  %Taking $\eps$ to be any constant less than $1/400$ yields the lemma.
 \end{proof}

% !TEX root = main.tex
\section{Proof of Theorem~\ref{thm:informal_1}
~\label{ap:caseAnalysis}}
%In this section we leverage our general characterization of stable zero training error points, Theorem~\ref{thm:main}, to establish our result regarding 2 layer ReLU networks, trained over 1-dimensional data.
%
%The following proposition, together with Theorem~\ref{thm:main}, establishes the above characterization of stable zero-training-error parameters for 2 layer ReLU networks, trained under SGD with label noise. %The proof of the proposition is stated in Appendix~\ref{ap:caseAnalysis}.
%
%\begin{proposition}\label{prop:thm1}
%Let $\theta$ denote the parameters of a network with 2 layers of trainable weights, with the middle layer of neurons having ReLU activations, and where there is an additional constant and linear unit leading to the output; let $f(\theta,x)$ denote the corresponding hypothesis function evaluated at input $x$, and assume $f(\theta,x_i)=y_i$ for all $i=1,\ldots,n$. If parameters $\theta$ have the property that $\sum_i \|\nabla_{\theta}f(\theta,x_i)\|_2^2$ has zero gradient in the span of dimensions in which $f(\theta,x_i)$ has zero gradient for all $i$, then $f(\theta,x)$ has the following property:  for every three consecutive data points, $x_i<x_{i+1}<x_{i+2}$, the function $f(\theta,x)$ on the interval $(x_i,x_{i+2})$ is either convex, or concave, in agreement with the convexity or concavity of the piecewise linear interpolation of $(x_i,y_i), (x_{i+1},y_{i+1}),$ and $(x_{i+2},y_{i+2}).$
%\end{proposition}

Before proving Theorem~\ref{thm:informal_1}, we formalize the notation that will be used throughout this section.  We consider a network with two layers of trainable weights, with an additional linear and bias unit leading to the output.  The network takes as input a one dimensional datapoint, $x$, and a constant, which we can assume wlog to be 1.  For the $i$th neuron in the middle layer, there are three associated parameters: $a_i$, the weight to input $x$, $b_i$ the weight to the constant input, and $c_i$, the weight from neuron $i$ to the output.  Hence the parameters $\theta=(\{a_i\}, \{b_i\}, \{c_i\},a,b)$ represent the following function: $$f_{\theta}(x) = \sum_{i}c_i\sigma(a_ix + b_i) + ax + b$$ where $\sigma$ is the ReLU non-linearity i.e. $\sigma(x) = max(0,x).$

The implicit regularization term for a dataset $(x_1,y_1),\ldots,(x_n,y_n)$, evaluated at parameters $\theta$, simplifies as follows:
\begin{eqnarray}
R(\theta) & := & \sum_j \|\nabla_{\theta}f_{\theta}(x_j)\|_2^2 \\&=& \sum_j\left( \|\nabla_{\{a_i\}}f_{\theta}(x_j)\|_2^2 + \|\nabla_{\{b_i\}}f_{\theta}(x_j)\|_2^2 + \|\nabla_{\{c_i\}}f_{\theta}(x_j)\|_2^2 + \|\nabla_{a,b}f_{\theta}(x_j)\|_2^2 \right)\nonumber \\
& = & \sum_j  \left(\sum_i (c_i x_j I_{a_ix_j+b_i > 0})^2 + (c_i I_{a_ix_j+b_i>0})^2 + (\sigma(a_ix_j+b_i))^2 \right)+x_j^2+1 \nonumber 
\end{eqnarray}
Defining the contribution of the $i$th ReLU neuron and $j$th datapoint to be $$R_{i,j}(\theta):=(\sigma(a_ix_j + b_i))^2 + c_i^2(1 + x_j^2)I_{a_ix_j + b_i > 0},$$ the regularization expression simplifies to $R(\theta) = \sum_{i,j} R_{i,j}(\theta)+\sum_j 1+x_j^2,$ where the last sum does not depend on $\theta$, thus has no $\theta$ gradient, and thus does not contribute to regularization.

\begin{definition}\label{def:convex kink}
The $i$th ReLU unit $f_i(x) = c_i\sigma(a_ix+b_i)$   has an \emph{intercept} at location $x = -\frac{b_i}{a_i}$, and we say this unit is \emph{convex} if $c_i>0$ and is concave if $c_i<0$.  If $c_i=0,$ then $f_i(x)=0$ and the unit has no effect on the function.
\end{definition}

\begin{proof}[Proof of Theorem~\ref{thm:informal_1}]
The proof will proceed by contradiction, considering a set of parameters, $\theta,$ and set of consecutive datapoints $(x_i,y_i), (x_{i+1},y_{i+1}), (x_{i+2},y_{i+2})$ that violates the claim, and then exhibiting a direction in which $\theta$ could be perturbed that preserves the values of the hypothesis function at the data points, but decreases the implicit regularizer proportionately to the magnitude of the perturbation.

Assume, that the piecewise linear interpolation of $(x_i,y_i), (x_{i+1},y_{i+1}), (x_{i+2},y_{i+2})$ is convex (i.e. concave up).  An analogous argument will apply to the case where it is convex down.  If $f(\theta,x)$ fits the three points, but is not also convex, then it must have a change of convexity, and hence there must be at least two ``kinks'' in the interval $(x_i,x_{i+2}),$ each corresponding to a ReLU unit whose intercept lies in this interval, and with one of the units corresponding to a ``convex'' unit (with $c>0$) and the other a ``concave'' unit (with $c<0$).  We will consider the case where the intercept of the concave unit, $k_1$ is less than the intercept of the convex unit, $k_2$ and the argument in the alternate case is analogous.  There are now three cases to consider: 1) the point $x_{i+1}$ lies between the intercepts, $x_{i+1}\in (k_1,k_2)$; 2) $x_{i+1} = k_1$ or $k_2$; and 3) there is no point in the interval $[k_1,k_2]$.  In each case, we will exhibit a perturbation of the two units in question that simultaneously preserves the function values at all data points $\{x_i\}$, while decreasing the implicit regularizer.  The proof in the first case will trivially also apply to the third case.

We begin with the first case, when $x_{i+1} \in (k_1,k_2).$  For notational convenience, we will henceforth use $x_0$ to denote $x_{i+1}$.  Let $a_1,b_1,c_1$ denote the parameters of the first unit, and $a_2,b_2,c_2$ denote the parameters of the second unit in question.  Figure~\ref{fig:allcases} depicts the setting where $x_{0} \in (k_1,k_2),$ along with the four possible configurations of the units, according to the four possible configurations of the signs of $a_1$ and $a_2$.  In each case, the dotted lines in the figure indicate the direction of perturbation of these two units which 1) preserves the function value at all data points, and 2) decreases the value of the implicit regularizer.  We note that in several of the cases, the bias unit, $b$, and and linear unit, $a$, which are directly connected to the output, must also be adjusted to accomplish this.  We will never perturb the weights, $c_1$, $c_2$, leading to the output neuron.

 \begin{figure}
  \centering
 \includegraphics[width=1\linewidth]{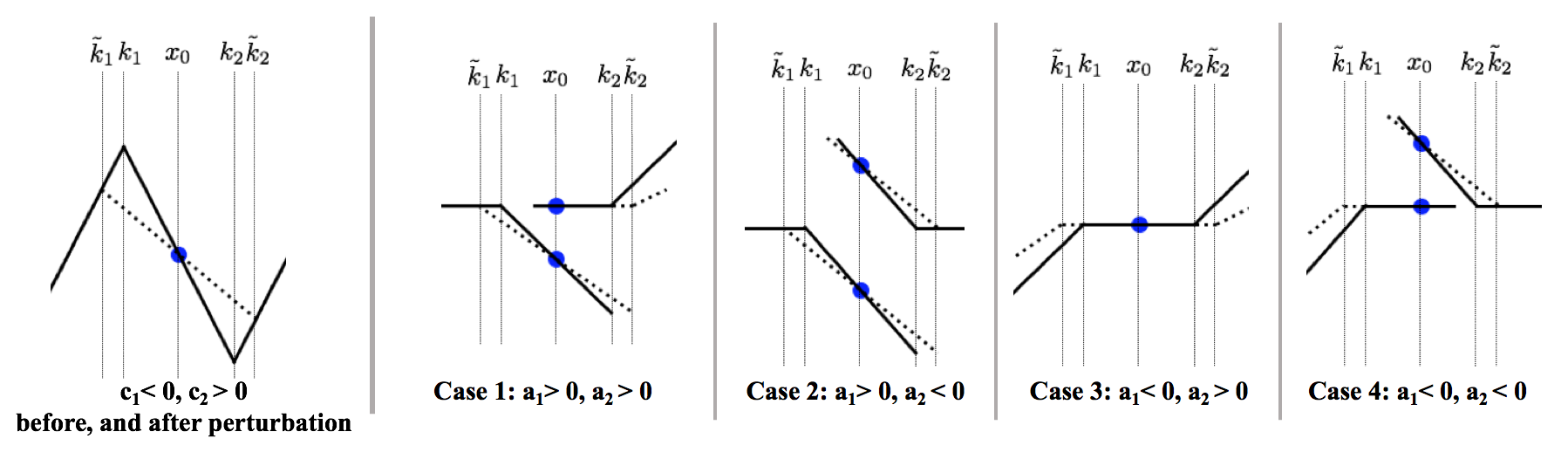}
  \caption{\small{The leftmost figure depicts the case where the middle datapoint lies between the intercepts of the ReLU units with opposing convexities.  The solid line depicts the original function, and the dotted line depicts the function after the perturbation, which preserves the function values at all datapoints and decreases the regularization expression.  The rightmost four plots depict the four possible types of ReLU units that could give rise to the function depicted in the left pane, together with the perturbations that realize the effect depicted in the left pane.  For cases 2 and 3, the linear and bias units must also be adjusted to preserve the function values at the datapoints.\label{fig:allcases}}}
\end{figure}

Let $\tilde{a}_1, \tilde{b}_1$ and $\tilde{a}_2, \tilde{b}_2$ be the parameters of the two perturbed ReLU units and $\tilde{k}_1$ and $\tilde{k}_2$ be the new location of the corresponding intercepts.  The perturbations will be in terms of an arbitrarily small quantity  $\epsilon>0$, and hence we will assume that, for all $j \neq i+1$, $x_{j} \not \in [\tilde{k}_1,\tilde{k}_2].$  Let $\tilde{R}_{1j}, \tilde{R}_{2j}$ denote the contributions to the regularization expression for units 1 and 2 corresponding to the jth datapoint, after the perturbation.

\paragraph{Case 1 (\boldmath $a_1 > 0, c_1 < 0, a_2 > 0, c_2 > 0$): \unboldmath} 
We first give an intuitive argument of how the perturbation is chosen to preserve the function values while decreasing the regularization. As depicted in the second pane of Figure \ref{fig:allcases}, we change the parameters of the first ReLU unit $a_1$ and $b_1$ such that the intercept ${k}_1$ moves towards the left to a position $\tilde{k}_1$ and the slope $a_1$ decreases. The changes in $a_1$ and $b_1$ are chosen such that the value at the point $x_0$ remains the same. The second ReLU unit's parameters are perturbed such that for all datapoints $x_j \geq \tilde{k}_2$, the change in the function values due to the changes in the parameters of the first ReLU unit are balanced by them.  Hence, the function values are preserved for all datapoints.  To see that the regularization decreases by the same order of magnitude as the perturbation, recall that the regularization term for a ReLU unit $i$ and datapoint $j$ is proportional to $(\sigma(a_ix_j + b_i))^2$ if the value of $c_i$ is kept unchanged. From Figure~\ref{fig:allcases}, the value of $(\sigma(a_ix_j + b_i))^2$ for both units remains the same for all datapoints $x_j \leq x_0$ and strictly decreases (proportionately to the magnitude of the perturbation) for all datapoints $x_j \geq \tilde{k}_2$.  This realizes the intuition that the implicit regularizer promotes small activations in the network.

A nearly identical argument applies in the other three cases depicted in Figure~\ref{fig:allcases}, with the slight modification in cases 2 and 3 that we need to perturb the linear and bias units to preserve the function values, and the regularization term is independent of the values of those parameters. %Next, we explicitly state the perturbations, and compute the improvement in the regularizer for all these cases, and the cases corresponding to the setting where the data point $x_0$ lies at one of the intercepts, $k_1$ or $k_2$ are analogous

%\subsection{Notation}
 Now, we explicitly describe the case analysis mentioned above, and explicitly state the perturbations, and compute the improvement in the regularizer for all four cases, and the cases corresponding to the setting where the data point $x_0$ lies at one of the intercepts, $k_1$ or $k_2$ are analogous.  For clarity, Figure~\ref{fig:bothcases} depicts the function before the perturbation, and after, for both the case when $x_0$ lies between the intercepts $k_1,k_2$, and when $x_0=k_1.$
 \begin{figure}[h]
\centering
\subfloat[When the datapoint is between the kinks.\label{fig:point_bet_kink}]{%
\includegraphics[width=0.3\textwidth]{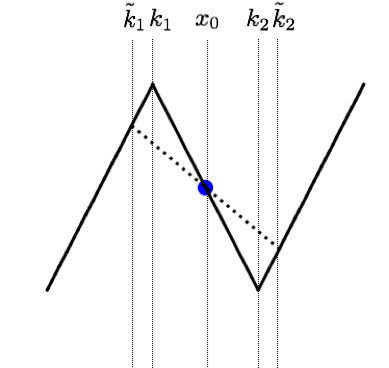}}\hspace{20pt}
\subfloat[When the datapoint is on one of the kinks.\label{fig:point_on_kink}]{%
\includegraphics[width=0.3\textwidth]{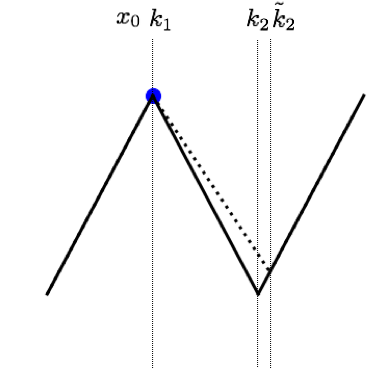}}
\caption{The plots show the  change such that the function values at the datapoints are preserved and the regularization term strictly decreases.}
\label{fig:bothcases}
\end{figure}
% %
%
%\begin{figure}[h]
%\centering
%\begin{subfigure}%{.5\textwidth}
%  \centering
%  \includegraphics[width=.3\linewidth]{plots/point_bet_kink.png}
%  \subcaption{When the datapoint is between the kinks}
%  \label{fig:point_bet_kink}
%\end{subfigure}%
%\begin{subfigure}%{.5\textwidth}
%  \centering
%  \includegraphics[width=.3\linewidth]{plots/point_on_kink.png}
%  \subcaption{When the datapoint is on one of the kinks}
%  \label{fig:point_on_kink}
%\end{subfigure}
%\caption{\small{The plots show the  change such that the function values at the datapoints are preserved and the regularization term strictly decreases.\label{fig:bothcases}}}
%\end{figure}
%      
% 
% 
%   
%
%\begin{proof}
We begin by computing the perturbations for each of the four cases depicted in Figure~\ref{fig:allcases}. When the values of linear and bias units $a,b$ are not mentioned, we assume there is no change in them.
 \paragraph{Case 1 (\boldmath $a_1 > 0, c_1 < 0, a_2 > 0, c_2 > 0):$\unboldmath}
    \begin{equation*}
\begin{gathered}
    \tilde{a}_1 = a_1(1 - \epsilon) \qquad \tilde{b}_1 = b_1 + x_0a_1\epsilon\\
\tilde{a}_2= a_2  -\frac{c_1}{c_2}(\tilde{a}_1 - a_1) \qquad \tilde{b}_2= b_2  -\frac{c_1}{c_2}(\tilde{b}_1 - b_1)
\end{gathered}
\end{equation*}
    First, observe that the intercept for ReLU 1 moves to the left since
    $$\tilde{k}_1 - k_1 = -\frac{\tilde{b}_1}{\tilde{a}_1} + \frac{{b}_1}{{a}_1} = -\frac{\epsilon(a_1x_0+b_1)}{a_1(1-\epsilon)} < 0 $$
    The last inequality follows since $0 < \epsilon < 1$ and $a_1 > 0$ and $a_1x_0 + b_1 > 0$ since $x_0 > k_1$ and $a_1k_1 + b_1 = 0$.
Similarly, the intercept for ReLU 2 moves to the right $$\tilde{k}_2 - k_2 = -\frac{\tilde{b}_2}{\tilde{a}_2} + \frac{{b}_2}{{a}_2} = \frac{a_1c_1\epsilon(a_2x_0+b_2)}{a_2(c_2a_2+c_1a_1\epsilon)} > 0 $$

The last inequality follows because $c_1 < 0$, $a_1,a_2 > 0$, $a_2x_0+b_2 < 0$ and $c_2a_2+c_1a_1\epsilon > 0$ for sufficiently small $\epsilon$. Now, we will verify that $f(x_j) = \tilde{f}(x_j) \; \forall \; x_j, j \in [n]$ and the total regularization term $R$  decreases by $\Theta(\eps)$. We will analyze the three cases separately where $x_j \leq \tilde{k}_1$, $x_j = x_0$ and $x_j \geq \tilde{k}_2$. 

     \paragraph{\boldmath $x_j \leq \tilde{k}_1$: \unboldmath}
        Since both the units were not active for $x_j \leq \tilde{k}_1$ and are not active after the change, there is no change in the function value.  Similarly, since the units were not active before the change and did not become active after the change, the regularization term for  $x_j \leq \tilde{k}_1$ does not change.
    \paragraph{\boldmath $x_j = x_0$: \unboldmath}
    First, calculating the value of $\tilde{a}_1x_0 + b_1$, we get that
    \begin{equation}
    \tilde{a}_1x_0 + b_1 = a_1(1 - \epsilon)x_0 + b_1 + a_1\epsilon x_0 = a_1x_0 + b_0 \label{eqn:case12}
    \end{equation}
The function value for $x_0$ does not change since the contribution of the first unit does not change by \eqref{eqn:case12} and the second unit remains off before and after the change. This is by design as we decreased the slope $a_1$ and moved the intercept $k_1$ to the left such that function value at point $x_0$ is preserved.
    \begin{equation*}
    \tilde{f}(x) - f(x) = c_1\sigma(\tilde{a}_1x_0 + \tilde{b}_1) + c_2\sigma(\tilde{a}_2x_0 + \tilde{b}_2) - c_1\sigma({a}_1x_0 + {b}_1) - c_2\sigma({a}_2x_0 + {b}_2) = 0 
    \end{equation*}
  Calculating the change in regularization value with the perturbed parameters, we see there is no change since $\tilde{a_1}x_0 + \tilde{b}_0 = {a_1}x_0 + {b}_0$ by \eqref{eqn:case12} and $c$ does not change. 
     \begin{equation*}
     \tilde{R}_{10} - {R}_{10} = (\sigma(\tilde{a}_1x_0 + \tilde{b}_1))^2 + c_1^2(1 + x_0^2)I_{\tilde{a}_1x_0 + \tilde{b}_1 > 0} - (\sigma(\tilde{a}_1x_0 + \tilde{b}_1))^2 - c_1^2(1 + x_0^2)I_{\tilde{a}_1x_0 + \tilde{b}_1 > 0} = 0
     \end{equation*}
     Since the second unit remains off for $x_0$ before and after the change, the regularization value does not change.
          \begin{equation*}
     \tilde{R}_{20} - R_{20} = (\sigma(\tilde{a}_2x_0 + \tilde{b}_2))^2 + c_2^2(1 + x_0^2)I_{\tilde{a}_2x_0 + \tilde{b}_2 > 0} - (\sigma({a}_2x_0 + {b}_2))^2 - c_2^2(1 + x_0^2)I_{{a}_2x_0 + {b}_2 > 0} = 0 
     \end{equation*}
      Thus, we see that both the function value and the regularization term do not change for $x_0$.
    \paragraph{\boldmath $x_j \geq \tilde{k}_2$: \unboldmath}
    Now for this case, both the units are active before and after the change. So, we need to look at the how the total contribution changes to both the output value and the regularization for both the units. First, calculating $\tilde{a}_1x_j + \tilde{b}_1 - ({a}_1x_j + {b}_1) $, we see that it is strictly negative since $\epsilon > 0, a_1 > 0$ and $x_j \geq \tilde{k}_2 > k_2 > x_0$.
    \begin{equation}
        \tilde{a}_1x_j + \tilde{b}_1 - ({a}_1x_j + {b}_1)  = a_1(1-\epsilon)x_j + b_1 + a_1\epsilon x_0 - {a}_1x_j - {b}_1 = \epsilon a_1(x_0 - x_j)
        \label{eq:case12} < 0
    \end{equation}
    
    Similarly, calculating $\tilde{a}_2x_j + \tilde{b}_2 - ({a}_2x_j + {b}_2)$, we see that it is also strictly negative since $c_1 < 0$ and $c_2 > 0$.
    \begin{equation}
          \tilde{a}_2x_j + \tilde{b}_2 - ({a}_2x_j + {b}_2)  = (\tilde{a}_2-a_2)x_j + \tilde{b}_2 - {b}_2 = -\frac{c_1}{c_2}((\tilde{a}_1-a_1)x_j + \tilde{b}_1 - {b}_1) = -\frac{c_1}{c_2}\epsilon a_1(x_0 - x_j)
           \label{eq:case13}
    \end{equation}  
    This can also be readily seen from the figure \ref{fig:allcases}.
        Now, calculating the change in function value due to the perturbed parameters, we get
        \begin{align*}
    \tilde{f}(x_j) - f(x_j) &= c_1\sigma(\tilde{a}_1x_j + \tilde{b}_1) + c_2\sigma(\tilde{a}_2x_j + \tilde{b}_2) - c_1\sigma({a}_1x_j + {b}_1) - c_2\sigma({a}_2x_j + {b}_2) \\
    &= c_1((\tilde{a}_1 - a_1)x_j + \tilde{b}_1 - b_1) + c_2((\tilde{a}_2 - a_2)x_j + \tilde{b}_2 - b_2) 
    \end{align*}
    Now, substituting the changes computed in equation \eqref{eq:case12} and equation \eqref{eq:case13}, we get that
     \begin{equation*}
     \tilde{f}(x_j) - f(x_j) = c_1a_1\epsilon(x_0 - x_j) + c_2\left(-\frac{c_1}{c_2}\epsilon a_1(x_0 - x_j)\right) = 0
    \end{equation*}
    Hence, we see that the function values are preserved for datapoints in this range. This is because the changes in the parameters $a_2$ and $b_2$ were chosen in such a way so that the change in function value introduced due to the change in parameters of unit 1 can be balanced.
    Calculating the change in regularization value with the perturbed parameters, we get that the regularization term strictly decreases since $0 < \tilde{a}_1x_j+\tilde{b}_1 < a_1x_j + b_1$ by \eqref{eq:case12} which we have already argued before. 
       \begin{equation*}
     \tilde{R}_{1j} - R_{1j} = (\sigma(\tilde{a}_1x_j + \tilde{b}_1))^2 + c_1^2(1 + x_j^2)I_{\tilde{a}_1x_j + \tilde{b}_1 > 0} -     (\sigma({a}_1x_j + {b}_1))^2 - c_1^2(1 + x_j^2)I_{{a}_1x_j + {b}_1 > 0} < -\Theta(\eps)
     \end{equation*}
     Similarly, since $0 \leq \tilde{a}_2x_j+\tilde{b}_2 < a_2x_j + b_2$ by equation \eqref{eq:case13}, the regularization value for unit 2 strictly decreases for this range of datapoints.
           \begin{equation*}
     \tilde{R}_{2j} - R_{2j} = (\sigma(\tilde{a}_2x_j + \tilde{b}_2))^2 + c_2^2(1 + x_j^2)I_{\tilde{a}_2x_j + \tilde{b}_2 > 0} - (\sigma({a}_2x_j + {b}_2))^2 - c_2^2(1 + x_j^2)I_{{a}_2x_j + {b}_2 > 0}< -\Theta(\eps).
     \end{equation*}
    \paragraph{Case 2 (\boldmath $a_1 > 0, c_1 < 0, a_2 < 0, c_2 > 0):$\unboldmath}
    This case corresponds to the third pane in Figure~\ref{fig:allcases}.
\begin{equation*}
\begin{gathered}
\tilde{a}_1 = a_1(1 - \epsilon) \qquad \tilde{b}_1 = b_1 + x_0a_1\epsilon\\
\tilde{a}_2= a_2  +\frac{c_1}{c_2}(\tilde{a}_1 - a_1) \qquad \tilde{b}_2= b_2  +\frac{c_1}{c_2}(\tilde{b}_1 - b_1)\\
a = -c_1(\tilde{a}_1 - a_1) \qquad b = -c_1(\tilde{b}_1 - b_1)\\
\end{gathered}
\end{equation*}
Similarly to the previous case, we can argue that the function value at the datapoints remain same and regularization decreases by $\Theta(\eps).$
    \paragraph{Case 3 \boldmath $(a_1 < 0, c_1 < 0, a_2 > 0, c_2 > 0):$ \unboldmath}
    This case corresponds to the fourth pane in Figure~\ref{fig:allcases}:
            \begin{equation*}
\begin{gathered}
\tilde{a}_1 = a_1(1 - \epsilon) \qquad \tilde{b}_1 = b_1 + x_0a_1\epsilon\\
\tilde{a}_2= a_2  +\frac{c_1}{c_2}(\tilde{a}_1 - a_1) \quad \tilde{b}_2= b_2  +\frac{c_1}{c_2}(\tilde{b}_1 - b_1)\\
a = -c_1(\tilde{a}_1 - a_1) \qquad b = -c_1(\tilde{b}_1 - b_1)\\
\end{gathered}
\end{equation*}
Similarly to the previous case, we can argue that the function value at the datapoints remain same and regularization decreases by $\Theta(\eps).$
    \paragraph{Case 4 \boldmath $(a_1 < 0, c_1 < 0, a_2 < 0, c_2 > 0):$ \unboldmath}
    This case corresponds to the right pane in Figure~\ref{fig:allcases}:
        \begin{equation*}
\begin{gathered}
\tilde{a}_1= a_1  -\frac{c_2}{c_1}(\tilde{a}_2 - a_2) \qquad \tilde{b}_1= b_1  -\frac{c_2}{c_1}(\tilde{b}_2 - b_2)\\
\tilde{a}_2 = a_2(1 - \epsilon) \qquad \tilde{b}_2 = b_2 + x_0a_2\epsilon
\end{gathered}
\end{equation*}
    Similarly to the previous case, we can argue that the function value at the datapoints remains the same and regularization  decreases by $\Theta(\eps).$
%
    % \item $x_0 = k_1$ or $x_0 = k_2$\\
     %
     % If the datapoint $x_0$ is the at the kink $k_1 = -\frac{b_1}{a_1}$  (or $k_2 = -\frac{b_2}{a_2}$), the exact same proof as in the previous case goes through where we put $x_0 = -\frac{b_1}{a_1}$  (or $-\frac{b_2}{a_2}$). In this case, the kink $k_1$ (or $k_2$) remains at the same position whereas everything else is similar. 
%
%
\end{proof}

\section{Tanh and Logistic Activations (Proof of Theorem~\ref{thm:n=1})}\label{ap:n=1}
Here, we  discuss the implications of our characterization of stable points in the dynamics of SGD with label noise, for networks with either hyperbolic tangent activations or logistic activations. In particular, we will consider networks with two layers, of arbitrary width, that are trained on a single $d$-dimensional data point $(x,y)$. We find that, at ``non-repellent'' points, the neurons can be partitioned into a constant number of essentially equivalent neurons, and thus the network provably emulates a constant-width network on ``simple" data.  %two equivalence classes, for a notion of equivalence we define later. When the activations are tanh, one of those classes contains neurons that are $0$ everywhere, so all neurons that affect the networks output are equivalent. These results once again suggest that ``non-repellant'' parameterize networks that are relatively simple.

Throughout this section we denote our single training point by $(x,y)$, where $x \in \mathbb{R}^d$ and $y \in \mathbb{R}$, and we assume $x \neq 0$. Our network is a two layer network, parameterized by a length $n$ vector $c$ and a $d\times n$ matrix $w$, and represents the function
\begin{equation*}
    f(x; c, w) = \sum_{i=1}^n c_i \sigma (w_i^\intercal x)
\end{equation*}
where $c \in \mathbb{R}^n$ and $w_1,...,w_n$ are the columns of $w \in \mathbb{R}^{d \times n}$. In Section~\ref{sec:logistic} below, the activation function $\sigma$ will be the logistic function, while in Section~\ref{sec:tanh} we analyze the tanh activation function.  Since we are only concerned with the network's behavior on a single data point $(x,y)$, unlike in the body of the paper where the subscript $i$ typically denoted a choice of data point, here we use the subscript $i$ to index the hidden units of the network. We  let $h_i = \sigma(w_i^t x)$ denote the value of the $i^{\text{th}}$ hidden unit and let $o_i = c_i h_i$ denote the output (after scaling) of the $i^{\text{th}}$ hidden unit. Then, we simply have that $f(x; c,h) = \sum_{i=1}^n o_i$.

\subsection{``Non-repellent'' points for logistic activation}\label{sec:logistic}
We prove the following proposition, establishing the portion of Theorem~\ref{thm:n=1} concerning logistic activation functions:
\iffalse
\begin{proposition}
\label{prop:logistic stable}
    Consider a two layer neural network with a logistic activation function trained to fit a single $d$-dimensional data point. All ``non-repellent" points parameterize networks with neurons that can be partitioned into two classes, with all neurons in the same class ``equivalent''.
\end{proposition}
Alternate standalone proposition \fi
\begin{proposition}
\label{prop:logistic stable}
    %Consider a dataset consisting of a single $d$-dimensional labeled point,  $(x,y)$. Let $\theta = (\{c_i\}, \{w_i\})$ denote the parameters of a 2-layer network with logistic activations that computes the function:
    %\begin{align}
    %    f_\theta(x) = \sum_{i=1}^n c_i \sigma (w_i^t x)
    %\end{align}
    Let $\theta = (c,w)$ parameterize a two-layer network with logistic activations. If $\theta$ is ``non-repellent'' according to Definition~\ref{def:nonrepellent} for the dynamics of training with a single d-dimensional datapoint $(x,y)$ where $x \neq 0$, then there exists $\alpha_1,\alpha_2$ and $\beta_1, \beta_2$ such that for each hidden unit $i$, either $c_i = \alpha_1$ and $h_i = \beta_1$ or $c_i = \alpha_2$ and $h_2 = \beta_2$.
\end{proposition}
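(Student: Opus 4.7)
The plan is to apply Theorem~\ref{thm:main} in the single-datapoint setting. Because there is only one datapoint, the zero-training-error manifold $\{\theta : f(x;\theta) = y\}$ has codimension one, so its tangent space is precisely the orthogonal complement of $\nabla_\theta f(x;\theta)$, and non-repellence is equivalent to the Lagrange condition $\nabla_\theta R(\theta) = \lambda\,\nabla_\theta f(x;\theta)$ for some scalar $\lambda \in \mathbb{R}$. I would extract this condition coordinate-by-coordinate (once for each scalar $c_i$ and once for each vector $w_i$), eliminate $c_i$ to obtain a single scalar equation in $h_i$ alone, and then use an elementary unimodality argument to conclude that at most two values of $h_i$ are possible.

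First I would use the logistic identity $\sigma'(z) = \sigma(z)(1-\sigma(z))$ to write the regularizer explicitly as $R(\theta) = \|\nabla_\theta f(x;\theta)\|^2 = \sum_i h_i^2 + \|x\|^2 \sum_i c_i^2\, h_i^2(1-h_i)^2$. Differentiating with respect to $c_i$ and equating to $\lambda\,\partial f/\partial c_i = \lambda h_i$, then dividing by $h_i$ (which is strictly positive for the logistic), yields the scalar equation (A): $2\|x\|^2 c_i h_i (1-h_i)^2 = \lambda$. The vector derivative with respect to $w_i$ gives vectors parallel to $x$ on both sides; factoring out the common nonzero scalar $\sigma'(w_i^t x) = h_i(1-h_i) > 0$ and a nonzero coordinate of $x$ (which exists since $x \neq 0$) leaves the scalar equation (B): $2 h_i + 2\|x\|^2 c_i^2 h_i(1-h_i)(1-2h_i) = \lambda c_i$, where I have used $\sigma''(z) = \sigma'(z)(1-2\sigma(z))$.

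Next I would rule out $\lambda = 0$: it would force every $c_i = 0$ by (A), and then (B) would force $h_i = 0$, impossible since $\sigma$ takes values in $(0,1)$. With $\lambda \neq 0$, I can solve (A) for $c_i = \lambda\bigl/\bigl(2\|x\|^2 h_i (1-h_i)^2\bigr)$ and substitute into (B); clearing denominators telescopes the system to the single constraint $h_i (1-h_i)^3 = \lambda^2/(4\|x\|^2)$, a constant independent of $i$. Setting $g(h) := h(1-h)^3$ on $(0,1)$, the derivative $g'(h) = (1-h)^2(1-4h)$ is positive on $(0,1/4)$ and negative on $(1/4,1)$, so $g$ is strictly unimodal with its unique maximum at $h = 1/4$; hence every level set of $g$ inside $(0,1)$ has at most two elements, call them $\beta_1,\beta_2$. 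Substituting $\beta_1,\beta_2$ back into (A) then determines the corresponding $\alpha_k = \lambda\bigl/\bigl(2\|x\|^2 \beta_k(1-\beta_k)^2\bigr)$, yielding the claimed two-class structure.

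I expect the main obstacle to be the cautious bookkeeping needed when reducing the $w_i$ vector equation to a scalar identity: one has to verify that both $\nabla_{w_i} R$ and $\nabla_{w_i} f$ are collinear with $x$ (so the Lagrange condition does not impose additional constraints), and that each of the factors $h_i$, $1-h_i$, $\sigma'(w_i^t x)$, and a coordinate of $x$ that is divided out is indeed nonzero. For logistic activations with $x \neq 0$ these hold automatically, the algebra collapses cleanly to $h_i(1-h_i)^3 = \mathrm{const}$, and the rest of the argument is elementary single-variable calculus.
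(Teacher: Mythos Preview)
Your argument is correct. Both your approach and the paper's rest on the same first-order condition --- that $\nabla_\theta R$ must lie in the span of $\nabla_\theta f$ at a non-repellent point --- but you execute it differently.

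The paper reparametrizes each unit by its output $o_i = c_i h_i$, rewrites $R = \sum_i \bigl[h_i^2 + o_i^2(1-h_i)^2\|x\|^2\bigr]$, and first optimizes out $h_i$ at fixed $o_i$ (obtaining $h_i = o_i^2\|x\|^2/(1+o_i^2\|x\|^2)$) to reduce to a one-variable function $R_o(o) = o^2\|x\|^2/(1+o^2\|x\|^2)$. It then argues via the ``swap $\epsilon$ of output between two units'' perturbation that $R_o'(o_i)$ must be constant across $i$, and finishes by showing $R_o'(o)=a$ has at most two solutions (via convexity of $a(1+o^2\|x\|^2)^2 - 2o\|x\|^2$). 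You instead write the Lagrange conditions directly in the $(c_i,w_i)$ coordinates, eliminate $c_i$ algebraically, and land on the constraint $h_i(1-h_i)^3 = \lambda^2/(4\|x\|^2)$, whose unimodality on $(0,1)$ is immediate from $g'(h) = (1-h)^2(1-4h)$. The two constraints are equivalent: your Lagrange multiplier $\lambda$ is exactly the paper's common value $a=R_o'(o_i)$, and substituting the paper's $h_i$-formula into $h_i(1-h_i)^3$ recovers $\lambda^2/(4\|x\|^2)$.

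What each buys: the paper's $o_i$-parametrization carries a cleaner physical interpretation (per-unit output) and its $R_o$ framework is reused verbatim for the tanh case; your direct Lagrange computation is more mechanical and arguably yields a simpler endgame, since the unimodality of $h(1-h)^3$ is a one-line derivative check rather than a convexity argument on a quartic. The bookkeeping you flagged (collinearity of $\nabla_{w_i} R$ and $\nabla_{w_i} f$ with $x$, and nonvanishing of $h_i$, $1-h_i$, $\sigma'$) is indeed all that needs care, and for the logistic with $x\neq 0$ it holds automatically as you note.
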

\begin{proof}
First, we derive the implicit regularizer, $R$, for a two layer network with logistic activations.  We compute:

\begin{equation*}
{\nabla_{c_i}f(x;c,w)} = h_i \qquad
{\nabla_{w_{ij}}f(x;c,w)} = c_ih_i(1-h_i)x_j 
\end{equation*}

Thus, 
\begin{equation*}
R = {||\nabla_{w,c}f(x;c,w)||^2} = \sum_i \big[h_i^2 + c_i^2 h_i^2(1 - h_i)^2||x||^2\big]
\end{equation*}
Recall that a choice of parameters with zero-error is ``non-repellant'' iff the implicit regularizer has zero gradient in the span of directions with zero function gradient. Thus, we want to consider directions that do not change the error, up to first order. Recall that we defined $o_i = c_i h_i$ and that the networks output is just $\sum_{i=1}^n o_i$. Any change to the parameters that leaves all the $o_i$ the same must leave the network output the same, and thus the error unchanged as well. First, we investigate for what choices of parameters do there not exist any directions that leave all $o_i$ constant but decrease the regularization term. We rewrite the regularization term using $o_i$:
\begin{equation}
\label{eq:logistic R from h}
    R = {||\nabla_{w,c}f(x;c,w)||^2} = \sum_i \big[h_i^2 + o_i^2(1 - h_i)^2||x||^2\big]
\end{equation}
Suppose for some $i$ that the derivative of the above expression with respect to $h_i$ is nonzero. Then, we can change $w_i$ in the direction that slightly increases $h_i$ while also decreasing $c_i$ just enough to keep $o_i$ constant. That direction would keep the error at $0$ but the implicit regularization term would have nonzero directional derivative in it. Thus, for ``non-repellent'' points, we must have that the following is $0$ for all $i$:
\begin{equation*}
   \frac{\partial}{\partial h_i}R = 2h_i + 2(h_i - 1)o_i^2 ||x||^2 = 0
\end{equation*}
We solve the above equation for $h_i$ to determine that at all ``non-repellent'' points:
\begin{equation}
\label{eq:logistic optimal h}
    h_i = \frac{o_i^2||x||^2}{1 + o_i^2 || x||^2}
\end{equation}
We can plug this back into equation \ref{eq:logistic R from h} to determine that at ``non-repellent'' points the following must be true:
\begin{equation*}
    R = \sum_i \big[(\frac{o_i^2||x||^2}{1 + o_i^2 || x||^2})^2 + o_i^2(1 - \frac{o_i^2||x||^2}{1 + o_i^2 || x||^2})^2||x||^2\big] = \sum_i  \frac{o_i^2||x||^2}{1 + o_i^2 || x||^2}
\end{equation*}
For convenience, we define $R_o(z) = \frac{z^2||x||^2}{1 + z^2 || x||^2}$. Then, we have that at ``non-repellent'' points, $R = \sum_{i=1}^n R_o(o_i)$. The function $R_o$, as well as its derivative, is depicted in Figure \ref{fig:R plot logistic}.

Next, we consider the effect of changing two units at a time. We claim that if there are units $i,j$ where $R_o'(o_i) \neq R_o'(o_j)$, then we are not at a ``non-repellent'' point. Consider moving the parameters in a direction that increases $o_i$ by $\epsilon$ and decreases $o_j$ by $\epsilon$. That direction will leave the network output constant, and therefore also the error. Furthermore, we can choose the direction so that it additionally modifies $h_i$ and $h_j$ so that they satisfy equation \ref{eq:logistic optimal h} with respect to the modified $o_i$ and $o_j$. Altogether, this means that $R$ changes by $(R_o(o_i + \epsilon) - R_o(o_i)) - (R_o(o_j + \epsilon) - R_o(o_j)) $. The result is that, after a change by $\epsilon$, the new regularization penalty will change (up to first-order approximation) by $\epsilon(R_o'(o_i) - R_o'(o_j))$, which is nonzero. Thus, $R$ decreases linearly in the direction we constructed, implying we are not at a ``non-repellent'' point, yielding the desired contradiction.

\begin{figure}[tb]
  \centering
  \includegraphics[width=0.4\textwidth]{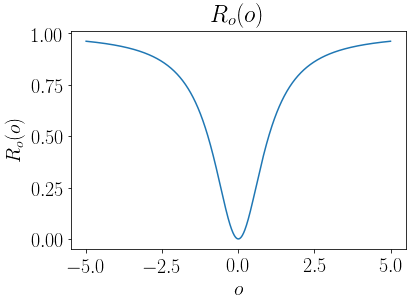}\quad \includegraphics[width=0.4\textwidth]{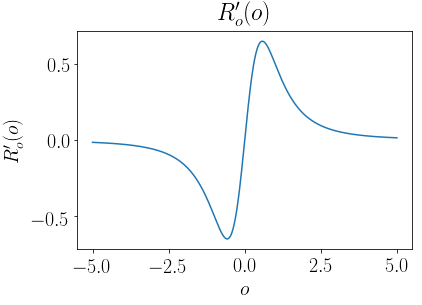}
\caption{\small{Plots depicting the function $R_o$ on the left and its derivative on the right, for $||x|| = 1$. From the plots, we see that the equation $R_o'(o) = a$ has at most two solutions for any choice of $a$. Other choices of $||x||$ would only stretch the plots, which does not affect that conclusion}\label{fig:R plot logistic}}
\end{figure}

Thus, at a ``non-repellent'' point we must have that $R_o'(o_i)$ is the same for all $o_i$. Thus the number of different values of $o_i$ is upper bounded by the number of solutions to the equation $R_o'(o) = a$ where $a$ is some scalar. See Figure \ref{fig:R plot logistic} for a plot illustrating that this equation has at most 2 solutions. To prove this,  we first compute the derivative and set it equal to $a$
\begin{equation*}
    R_o'(o) = \frac{2 o ||x||^2}{(1 + o^2||x||^2)^2} = a \implies
    a(1 + o^2||x||^2)^2 - 2o||x||^2 = 0
\end{equation*}
Since $||x||\neq 0$, the function $a(1 + o^2||x||^2)^2 - 2o||x||^2$ is a strictly convex function of $o$ for $a>0$, is strictly concave for $a<0$, and a linear function when $a=0$, and thus in all cases has at most $2$ solutions for $||x||\neq 0$. Thus, at a ``non-repellent'' point, there are at most two distinct values for $o_1,...,o_n$. Furthermore, we have already shown that at ``non-repellent'' points, $h_i$ is a function of $o_i$. It also follows that $c_i = o_i/h_i$ is a function of $o_i$. Thus, if $o_i = o_j$ then $c_i = c_j$ and $h_i = h_j$, so all units with the same output ($o_i$) also share the same value for $c_i$ and $h_i$. Hence, there are at most two possible values for $c_i,h_i$, which we can name $\alpha_1,\beta_1$ and $\alpha_2, \beta_2$, proving this proposition.
\end{proof}

\subsection{``Non-repellent'' points for tanh activation}\label{sec:tanh}
The following proposition characterizes the portion of Theorem~\ref{thm:n=1} concerning tanh activations.
\iffalse
For tanh activation, we can prove a slightly stronger version of proposition \ref{prop:tanh stable}:
\begin{proposition}
\label{prop:tanh stable}
    Consider a two layer neural network with tanh activation functions trained to fit a single $d$-dimensional data point. All ``non-repellen'' points parameterize networks with neurons that can be partitioned into two classes, with all neurons in the same class ``equivalent''. Furthermore, one of those classes must contain neurons that are $0$ everywhere.
\end{proposition}
Alternate standalone proposition
\fi

\begin{proposition}
\label{prop:tanh stable}
    %Consider a dataset consisting of a single $d$-dimensional point corresponding to $(x,y)$. Let $\theta = (\{c_i\}, \{w_i\})$ denote the parameters of a 2-layer network with tanh activations that computes the function:
    %\begin{align}
    %    f_\theta(x) = \sum_{i=1}^n c_i \text{tanh} (w_i^t x)
    %\end{align}
    %If $\theta$ is ``non-repellent" according to Definition~\ref{def:nonrepellent}, then there exists $\alpha$ and $\beta$ such that for each hidden unit $i$, either $c_i = \pm \alpha$ and $w_i^t x = \pm \beta$ (where the $\pm$ have the same sign in both expressions) or $c_i = 0$ and $w_i^t x = 0$
    Let $\theta = (c,w)$ parameterize a two-layer network with tanh activations. If $\theta$ is ``non-repellent'' according to Definition~\ref{def:nonrepellent} for the dynamics of training with a single d-dimensional datapoint $(x,y)$ where $x \neq 0$, then there exists $\alpha$ and $\beta$ such that for each hidden unit $i$, either $c_i = \alpha$ and $h_i = \beta$ or $c_i = -\alpha$ and $h_2 = -\beta$ or $c_i = h_i = 0$.
\end{proposition}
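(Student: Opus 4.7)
My plan is to mirror the proof of Proposition~\ref{prop:logistic stable}, adapted to $\tanh$ using $\sigma'(z) = 1 - \sigma(z)^2$. First I would write the implicit regularizer in this setting as
\[ R = \sum_i \left[h_i^2 + c_i^2(1 - h_i^2)^2 \|x\|^2\right], \]
and introduce $o_i = c_i h_i$, so that the network's output equals $\sum_i o_i$ and any perturbation preserving the collection of $o_i$'s also preserves the training error. Holding each $o_i$ fixed and optimizing the per-unit regularizer over $h_i$ yields, after clearing $h_i^3$, the first-order condition $h_i^4 = \frac{o_i^2 \|x\|^2}{1 + o_i^2 \|x\|^2}$ (valid for units with $h_i \neq 0$). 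Substituting this back, the per-unit regularizer collapses to a function of $|o_i|$ alone, which simplifies to $R_o(o) = 2\|x\||o|\sqrt{1 + \|x\|^2 o^2} - 2\|x\|^2 o^2$. The qualitative difference from the logistic setting is that $R_o$ is an even function of $o$.

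The dead case requires separate handling. If $h_i = 0$ but $c_i \neq 0$, then perturbing $c_i$ leaves $f$ unchanged (since $\partial_{c_i} f = h_i = 0$) but changes $R$ at first order by $2 c_i \|x\|^2 \delta c_i$, so non-repellence forces $c_i = 0$; conversely, if $c_i = 0$ but $h_i \neq 0$, perturbing $w_i$ along $x$ changes $h_i$ without changing $f$ and changes $R$ by $2 h_i \delta h_i$, forcing $h_i = 0$. Hence every unit is either fully dead ($c_i = h_i = 0$) or has both coordinates nonzero. For any pair of non-dead units, perturbing $(o_i, o_j)$ by $(\epsilon, -\epsilon)$ while re-solving for the optimal $h_i, h_j$ preserves $f$, so non-repellence forces $R_o'(o_i) = R_o'(o_j)$, and thus $R_o'$ takes a common value $a$ across all non-dead units.

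The key calculation is then to bound the number of solutions of $R_o'(o) = a$. Since $R_o$ is even, $R_o'$ is odd, so it suffices to show $R_o'$ is strictly positive and strictly decreasing on $(0, \infty)$. Direct computation (letting $s = \|x\| o$) reduces this to showing that $B(s) := 1 + 2s^2 - 2s\sqrt{1+s^2}$ is positive and strictly decreasing on $s > 0$; both claims follow from the algebraic identity $(2+4s^2)^2 - (4s\sqrt{1+s^2})^2 = 4 > 0$. Consequently $R_o'(o) = a$ has at most one solution of each sign, and since $R_o'$ is odd, the common value $a$ can be realized on only one sign, so all non-dead $o_i$ take a single common value $o^*$. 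The first-order condition then pins down $|h_i|$, and $|c_i| = |o^*|/|h_i|$ pins down $|c_i|$, so $(|c_i|, |h_i|) = (\alpha, \beta)$ is uniform across non-dead units; the sign of $h_i$ remains free, with $\mathrm{sign}(c_i)$ forced by $c_i h_i = o^*$, giving exactly the two non-dead classes $(\alpha, \beta)$ and $(-\alpha, -\beta)$ claimed by the proposition (after choosing signs of $\alpha, \beta$ to match the sign of $o^*$). The main technical obstacle is the monotonicity of $R_o'$ on $(0, \infty)$; everything else follows the logistic template nearly verbatim.
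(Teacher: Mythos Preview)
Your proposal is correct and follows essentially the same route as the paper: compute the regularizer, introduce $o_i = c_i h_i$, derive the per-unit first-order condition $h_i^4 = \frac{o_i^2\|x\|^2}{1+o_i^2\|x\|^2}$, collapse to the even function $R_o$, and establish injectivity of $R_o'$ away from the origin to force a single common nonzero value of $o_i$. Your handling of the dead case (treating $h_i=0,\,c_i\neq 0$ and $c_i=0,\,h_i\neq 0$ separately, upfront) is slightly more careful than the paper's, and your injectivity argument via the identity $(2+4s^2)^2-(4s\sqrt{1+s^2})^2=4$ is a cleaner alternative to the paper's direct computation that $R_o''<0$; the only small step left implicit is that $R_o'(o)\propto B(s)/\sqrt{1+s^2}$ is a product of two positive decreasing functions and hence decreasing.
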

%By $0$ everywhere, we mean that $c_i = 0$, so the neuron does not affect the network's output. It also turns out that all the neurons with $c_i =0$ have $h_i = 0$ at ``non-repellant'' points. Effectively, the above proposition states that all the neurons that affect the networks output are all equivalent to one another.

The proof of this proposition is mostly the same as the proof of proposition \ref{prop:logistic stable}. However, instead of proving that every point in the range of $R_o'(o)$ is attained by at most two points in the domain, we will prove that the function is injective. The other difference is that $R_o'(o)$ is undefined at $o = 0$, so in addition to the units that are $h_i$ and $c_i$ (up to sign), there can also be units with $0$ output.  Due to the highly repetitive logic, we go through this proof at a faster pace than \ref{prop:logistic stable}. 
\begin{proof}
For a two layer network with tanh activations, the implicit regularizer is
\begin{equation}
\label{eq:tanh R from h}
R  = \sum_i \big[h_i^2 + c_i^2 (1 - h_i^2)^2||x||^2\big] = \sum_i \big[h_i^2 + \frac{o_i^2}{h_i^2} (1 - h_i^2)^2||x||^2\big] 
\end{equation}

At ``non-repellent'' points, we must have that the below derivative is $0$ for all $i$
\begin{equation*}
   \frac{\partial}{\partial h_i}R = \frac{2 h_i^{4} + 2 h_i^{4} o_i^{2} ||x||^{2}  - 2 o_i^{2} ||x||^{2}}{h_i^3} = 0
\end{equation*}
We solve the above equation for $h_i^2$ to determine that at all ``non-repellent'' points:
\begin{equation}
\label{eq:tanh optimal h}
    h_i^2 = \sqrt{\frac{o_i^{2} ||x||^{2}}{o_i^{2} ||x||^{2} + 1}}
\end{equation}

\begin{figure}[tb]
  \centering
  \includegraphics[width=0.4\textwidth]{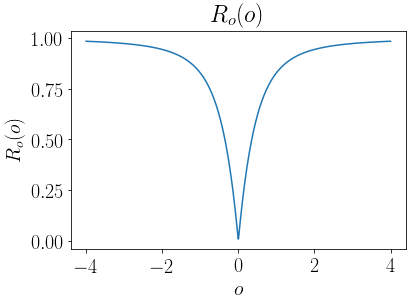}\quad \includegraphics[width=0.4\textwidth]{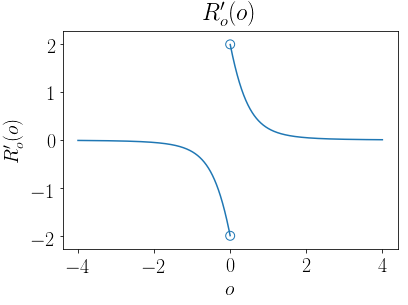}
\caption{\small{Plots depicting the function $R_o$ on the left and its derivative on the right, for $||x|| = 1$. From the plots, we see that $R_o'(o)$ is injective and undefined at $o=0$. Other choices of $||x||$ would only stretch the plots, which does not affect that conclusion.}\label{fig:R plot tanh}}
 
\end{figure}

We  plug this back into equation \ref{eq:tanh R from h} and simplify to determine that, at ``non-repellent'' points, the following must be true:
\begin{align*}
    R &= \sum_i \big[ h_i^2 + \frac{o_i^2}{h_i^2}(1-h_i^2)^2||x||^2 \big ]\\
%    &= \sum_i\frac{h_i^4 + o_i^2||x||^2(1+h_i^4-2h_i^2) }{h_i^2}\\
    &=  \sum_i\frac{h_i^4(1 + o_i^2||x||^2) + o_i^2||x||^2(1-2h_i^2) }{h_i^2}\\
    & =  \sum_i \frac{ o_i^2||x||^2 + o_i^2||x||^2(1-2h_i^2) }{h_i^2}\\
    & = \sum_i2o_i^2||x||^2(\frac{1 }{h_i^2}-1)\\
    & = \sum_i2o_i^2||x||^2(\sqrt{\frac{o_i^{2} ||x||^{2} + 1}{o_i^{2} ||x||^{2}}}-1)\\
    &= \sum_i2 \big[(\sqrt{o_i^2||x|^2(o_i^{2} ||x||^{2} + 1)}-o_i^2||x||^2)\big]
\end{align*}
We define $R_o(o_i) = 2 \big[(\sqrt{o_i^2||x||^2(o_i^{2} ||x||^{2} + 1)}-o_i^2||x||^2)\big]$. Recall that we showed in the proof of Proposition~\ref{prop:logistic stable} that if there exists two units, $i,j$, such that $R_o'(o_i) \neq R_o'(o_j)$, then we cannot be at a ``non-repellent'' point. In this case, it turns out that $R_o'(o)$ is undefined at $o=0$, which means any number of units can have zero output. However, at all other points, $R_o'(o)$ is injective. This means that all units that don't have $0$ output must share the same output. See Figure~\ref{fig:R plot tanh} for illustrative plots.

To show that $R_o'$ is injective, we first take $||x||=1$ without loss of generality, since the argument of $R_o$ always appears multiplied by $||x||$. Next, we differentiate and simplify to obtain \[R_o'(z)=2z\frac{z^2+\frac{1}{2}-\sqrt{z^4+z^2}}{\sqrt{z^4+z^2}},\] which is easily seen to have the same sign as $z$ (and is undefined when $z=0$). Further, the 2nd derivative---ignoring its value at 0---simplifies to the following expression: \[R_o''(z)=2|z|\frac{z^2+\frac{3}{2}}{(z^2+1)^{3/2}}-2,\] which is seen to be negative everywhere. Thus for positive $z$, we have $R_o'(z)$ is positive and decreasing, while for negative $z$ it is negative and decreasing, implying $R_o'(z)$ is injective, as desired.

We thus conclude that at ``non-repellent'' points, all units have either the same output ($o_i$) or have output $0$. From Equation \ref{eq:tanh optimal h}, we know that at ``non-repellent'' points $h_i^2$ is a function of $o_i$. Furthermore, $c_i^2 = o_i^2 / h_i^2$, so $c_i^2$ is also a function of $o_i$. Thus, all units that don't have output $0$ must have the same $h_i$ and $c_i$ (up to sign), and since they have the same $o_i = h_i c_i$, the signs must match up as well. This means that, at ``non-repellent'' points, there is $\alpha,\beta$ so that for each hidden unit $i$ where $o_i \neq 0$, either $c_i = \alpha$ and $h_i = \beta$ or $c_i = -\alpha$ and $h_i = -\beta$.

Finally, we show that at ``non-repellent'' points, if $o_i = 0$ then $h_i = c_i = 0$.  This means that not only does the $i^{\text{th}}$ unit not affect the output of the network at this particular choice of $x$, but it also does not affect the output of the network for any input. If $o_i = 0$ then from equation \ref{eq:tanh optimal h} we know $h_i = 0$. Recall that the networks output is $\sum_{i=1}^n c_i h_i$, so if $h_i = 0$, then changing $c_i$ does not affect the error. Therefore, we could only be at a ``non-repellent'' point if $\frac{\partial R}{\partial c_i} = 0$. Taking the derivative of equation \ref{eq:tanh R from h} we see $\frac{\partial R}{\partial c_i} = 2c_i(1-h_i^2)^2 ||x||^2$ which is zero only if $c_i = 0$. Thus, if $o_i = 0$ then $h_i = c_i = 0$.

\end{proof}

\end{document}